\newcommand{\extremaVgl}{magnitude-comparable} 
\newcommand{\minVgl}{unbiased-trustworthy}
\newcommand{\skewsensitive}{skew-sensitive}
\newcommand{\stereosensitive}{stereotype-sensitive}
\newcommand{\wordbias}{word bias}
\newcommand{\setbias}{set bias}
\newcommand{\skewbias}{skew}
\newcommand{\stereobias}{stereotype}
\newcommand{\R}{\mathbb{R}}
\DeclareMathOperator*{\EX}{\mathbb{E}}
\newcommand{\weatw}{$\textit{WEAT}_\textit{word}$}
\newcommand{\weat}{$\textit{WEAT}$}
\newcommand{\macw}{$\textit{MAC}_\textit{word}$}
\newcommand{\mac}{$\textit{MAC}$}
\newcommand{\dbw}{$\textit{DirectBias}_\textit{word}$}
\newcommand{\db}{$\textit{DirectBias}$}
\newcommand{\samew}{$\textit{SAME}_\textit{word}$}
\newcommand{\same}{$\textit{SAME}$}
\title{Evaluating Metrics for Bias in Word Embeddings}
\author{Sarah Schröder \and \bf{Alexander Schulz} \and \bf{Philip Kenneweg} \and \bf{Robert Feldhans} \and \bf{Fabian Hinder} \and \bf{Barbara Hammer}\\
	CITEC, Machine Learning Group\\
	Bielefeld University - Faculty of Technology \\
	\texttt{\{saschroeder, aschulz, pkenneweg, rfeldhans, fhinder, bhammer\}@techfak.uni-bielefeld.de} \\
}
\begin{document}
\newtheorem{definition}{Definition}
\newtheorem{theorem}{Theorem}
\newtheorem{lemma}{Lemma}

\newtheoremrep{theorem}{Theorem}
\newtheoremrep{lemma}{Lemma}

\maketitle

\begin{abstract}
Over the last years, word and sentence embeddings have established as text preprocessing for all kinds of NLP tasks and improved the performances significantly. Unfortunately, it has also been shown that these embeddings inherit various kinds of biases from the training data and thereby pass on biases present in society to NLP solutions.\\
Many papers attempted to quantify bias in word or sentence embeddings to evaluate debiasing methods or compare different embedding models, usually with cosine-based metrics. However, lately some works have raised doubts about these metrics showing that even though such metrics report low biases, other tests still show biases. In fact, there is a great variety of bias metrics or tests proposed in the literature without any consensus on the optimal solutions. Yet we lack works that evaluate bias metrics on a theoretical level or elaborate the advantages and disadvantages of different bias metrics.\\
In this work, we will explore different cosine based bias metrics. We formalize a bias definition based on the ideas from previous works and derive conditions for bias metrics. Furthermore, we thoroughly investigate the existing cosine-based metrics and their limitations to show why these metrics can fail to report biases in some cases. Finally, we propose a new metric, SAME, to address the shortcomings of existing metrics and mathematically prove that SAME behaves appropriately.

There are two conference papers including new experiments to evaluate the properties of existing cosine scores (\cite{icpram24schroeder}) and the SAME score (\cite{ijcnn24schroeder}).
\end{abstract}



\section{Introduction}
\label{sec:intro}

Word embeddings have been widely adopted in NLP tasks over the last decade, and have been further developed to embed the context of words, whole sentences or even longer texts into high dimensional vector representations. These are applied in a wide variety of downstream tasks, including translation and text generation as in question-answering systems (\cite{gpt}) or next sentence prediction (\cite{bert}), or sentiment and similarity analysis (\cite{use,word2vec,glove,sentencebert}) of texts.\\
However, many works have shown that these models capture various kinds of biases present in humans and society. Using biased word or text representations poses a great risk to produce unfair language models, which in turn can further amplify the biases in society. \\
The first works investigating bias in word embeddings focused on geometric relations between embeddings of stereotypically associated words. From there, several metrics based on cosine similarity have been proposed like the "Direct Bias" by \cite{bolukbasi} and the Word Embedding Association Test (WEAT) by \cite{weat}. Though initially introduced to prove the presence of bias in word embeddings, these tests have also been widely used to compare biases in different embedding models ((e.g. \cite{seat, use}) or validate debiasing method (e.g. \cite{sentdebias, karve2019conceptor, kaneko2021dictionary}).
Due to its popularity, there have been several variants introduced in the literature like the Sentence Encoder Association Test (SEAT) by \cite{seat}, the "Generalized WEAT" (\cite{weatgeneralized}) or WEAT for gendered languages (\cite{weatgender}).
Another cosine based metric is the Mean Average Cosine Similarity (MAC) score (\cite{mac}), though it has a different intuition than WEAT and the Direct Bias. \\
However, there is literature criticizing WEAT and its variants. For instance \cite{seat} found that SEAT produced rather inconsistent results on sentence embeddings and emphasized the positive predictability of the test. This prompted many other papers to doubt the meaningfulness or usability of cosine based metrics in general. Hence, methods to quantify bias in downstream tasks have been proposed and increasingly replace cosine based metrics (\cite{coreference, lipstick}).\\
On the one hand, this is desirable since it bridges the gap to fairness evaluation in general and allows to apply other fairness definitions like statistical parity or counterfactual fairness. On the other hand, it poses the risk to confound biases from word representations with biases introduced by the downstream task itself. Hence, when investigating bias in word embedding models, apart from specific downstream tasks, metrics directly applicable to the embeddings are beneficial.\\
Nevertheless, the doubts regarding existing metrics show the need to evaluate these metrics on a theoretical level. To our knowledge, the only work doing this is \cite{ripa}. The authors showed that WEAT has theoretical flaws that can cause it to overestimate biases and demonstrated that one could make up attribute sets in order to make the test appear statistically insignificant. While this alone is highly worrying, we find even more theoretical flaws that, on one hand, cause WEAT to underestimate biases under certain conditions. On the other hand, these flaws make WEAT results incomparable between different embeddings. Moreover, these flaws extend to other versions like SEAT or the generalized WEAT, and we also find flaws for MAC and the Direct Bias.\\
Noteworthy, there exist also works criticising that cosine based bias scores do not capture biases entirely. However, they rely on different definitions of bias. As we discuss in Chapter \ref{sec:bias_definition}, these definitions do not necessarily contradict the definition by \cite{bolukbasi} or \cite{weat}.\\
Since there are applications like document clustering or text similarity estimations, where the cosine similarity is directly queried (\cite{sentencebert, use, gao2021simcse, subramanian2018learning, liu2021transencoder}), quantifying bias in terms of cosine similarity is highly relevant and thus further evaluation of cosine based metrics remains necessary.\\

The main contributions in this work are the following:
(i) We define formal requirements for cosine based score functions which formalize a notion of meaningfulness in this context.
(ii) We analytically demonstrate that the state of the art metrics WEAT, the Direct Bias and MAC are only partially meaningful, corresponding to the previous definitions.
(iii) We propose a novel bias score function \emph{Scoring Association Means of Word Embeddings} (SAME), that is based on some principles of WEAT, but reformulates potentially problematic parts.
(iv) We experimentally investigate SAME and the state of the art score functions and substantiate the theoretic claims with real word embeddings.\\

The remainder of this paper is structured as follows: In Chapter \ref{sec:related_work} we give an overview over bias definitions and specifically cosine based bias metrics from the literature. The formal requirements for bias score functions are explained and applied to the existing score functions in Chapter \ref{sec:requirements}. Next, in Chapter 
\ref{sec:proposed_metric}, we propose SAME with extensions for multi-attribute biases and skew and stereotype distinction. Chapter \ref{sec:experiments} describes our experiments, where we retrain BERT on biased data. Thereby we induce specific biases as a ground truth to evaluate the different bias score functions. Lastly, we conclude our findings and give an overview over the bias score functions in Chapter \ref{sec:conclusion}.

\section{Related Work for Bias in Text Embeddings}
\label{sec:related_work}
In the following, we summarize the related work on bias in word embeddings. First, we focus on definitions and discuss our choice of cosine based scores. Then we revisit the different cosine based metrics from the literature.

\subsection{Bias Definition} \label{sec:bias_definition}

In this section, we describe several definitions of bias in word embeddings mentioned in the literature. The focus of this work lies on geometrical bias definitions, especially those regarding cosine similarity, and the skew and stereotype definitions. For the sake of clarity, we also discuss other bias definitions regarding downstream tasks. Lately, measuring bias in downstream tasks increasingly replaces geometrical bias scores in the literature. Yet, we argue that they cannot replace the geometrical definitions and scores entirely and thereby emphasize the usability of cosine based scores.

\subsubsection{Geometrical Bias Definitions}
Word embedding vectors are expected to reflect word relations by their geometrical relations, whereby distance in the vector space is said to correlate with the semantic similarity of words (\cite{word2vec, glove}). Due to this intuition, biases in word embeddings are often considered as flawed vector representations, where words are embedded inappropriately close or far from other words. Since the cosine similarity is often used as a distance measure to estimate word or document similarity (\cite{thongtan2019sentiment, shahmirzadi2019text, sentencebert, use}), it has also been used to measure bias in terms of geometric relations.\\


\begin{figure}[tb]
	\centering
	\includegraphics[scale=0.2]{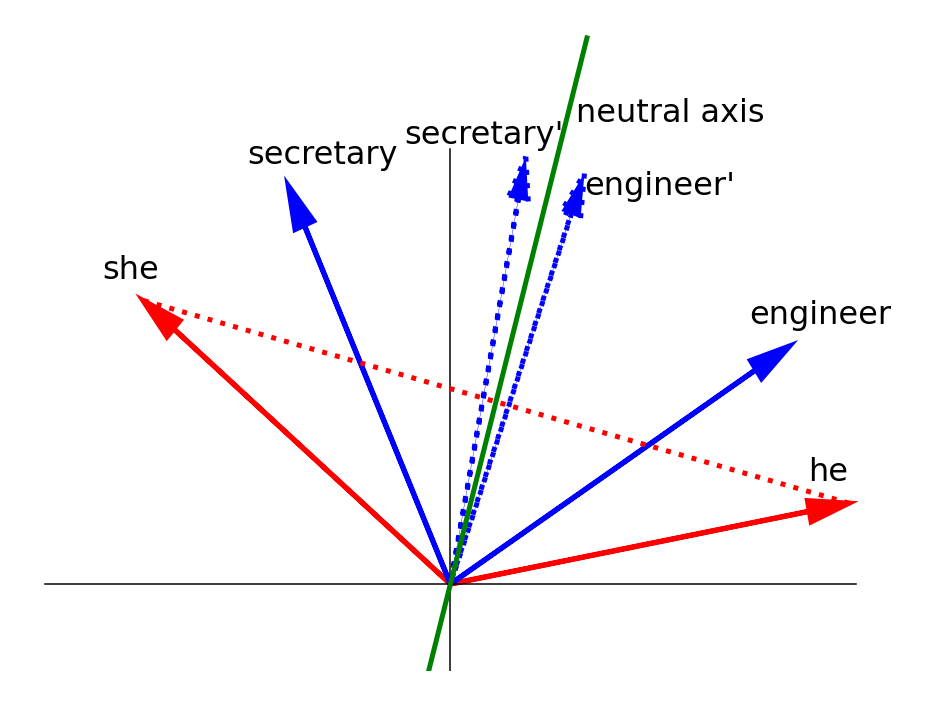}
	\caption{Simplified (2D) visualization of gender stereotypes in occupation embedding vectors. The occupation vectors in dotted lines show a less biased representation of the occupations. Ideally, all gender neutral terms should be represented on the neutral axis, thus being equidistant to $\bm{he}$ and $\bm{she}$. The red dotted line indicates the bias direction $\bm{g} = \pm (\bm{she}-\bm{he})$}
	\label{fig:bias_example}
\end{figure}

Based on this idea, \cite{bolukbasi} defined bias as the correlation between neutral words and a bias subspace or bias direction, that describes the relations between groups of words. For the simple case illustrated in Figure \ref{fig:bias_example} this would be the gender direction $\bm{g} = \bm{she} - \bm{he}$. To get a more robust estimation of group relations (e.g. male and female terms), one would use several word pairs and determine a low dimensional subspace that captures most of the variance of individual bias directions.\\
\cite{weat} define bias as the difference in angular distance between a neutral word and two different groups of words. In the example depicted in Figure \ref{fig:bias_example}, they would consider $\bm{secretary}$ biased since it has a lower angular distance to $\bm{she}$ than $\bm{he}$ and $\bm{engineer}$ vice versa. \\
Both definitions follow a similar intuition and use the cosine similarity as correlation or distance measure. For the simple case illustrated in Figure \ref{fig:bias_example} correlation with the bias direction matches up with the difference in angular distances. An important difference is that \cite{weat} contrast over two groups of theoretically neutral words to determine a stereotype (see Section \ref{sec:cosine_scores} for details) while \cite{bolukbasi} take the mean bias correlation over all neutral words (see Section \ref{sec:Bolukbasi-directbias}).\\

Another work by \cite{ripa} follows the definition from \cite{bolukbasi}. However, they argue that the vector length holds important information and thus biases should rather be measured using the inner product of word vectors instead of cosine similarity. Yet, their definition of bias mostly concerns word co-occurrences in the training data and how these are represented in word vectors instead of the fairness implications in different use cases of word embeddings. Hence, it is not clear to what extent the information encoded in the vector length influences fairness in downstream tasks. On the other hand, there are downstream tasks such as document clustering that might rely directly on the cosine similarity or use it as a dissimilarity measure. Consequently, using the cosine similarity (as a modification of the inner product that removes the influence of vector lengths) is a valid option in cases where the vector length can be neglected. Moreover, they only consider single word biases. When computing the bias over a set of words, it seems preferable to normalize the word biases by word vector length to have all words contribute with the same weight to the overall bias, resulting in the same notion as if using cosine based scores.\\ 

\subsubsection{Classification and Clustering Bias}

\cite{lipstick} found that stereotypical groups persisted after removing correlations of neutral words with a bias subspace or retraining on debiased text sources. This raised doubts on the usability of cosine based metrics. Instead they proposed to use clustering or classification tasks to investigate bias. In short, if a classifier or clustering algorithm can reconstruct stereotypical groups (according to previous correlation with the bias subspace), then there is still bias in the data, according to their definition.
However, there are two possible explanations why they found this persisting bias: Either the debiasing methods were not performed thoroughly (leaving some geometrical bias associations in the embeddings or text sources) or as \cite{ripa} described it: "Because we define unbiasedness with respect to a set of word pairs, we cannot make any claims about word pairs outside that set".
Another possibility is that these stereotypical groups simply reflect other relations independent of the bias attributes, but different enough to reconstruct stereotypical groups with a flawed classification task (e.g. different types of occupations that coincidentally relate to gender stereotypes). Since it is not clear whether one of these effects lead to their claims, we cannot take \cite{lipstick}'s work as evidence against cosine based metrics.\\

\subsubsection{Bias in other Downstream Tasks}
Lately, more and more works started to measure bias in downstream tasks as opposed to geometrical metrics. An example are \cite{coreference} and \cite{DBLP:journals/corr/abs-1904-03310-zhao1}, who used a co-reference resolution task. For instance, the language model has to decide which word a gendered pronoun was associated with. Confronting the model with pro- and anti-stereotypical sentences can be used to determine whether the decisions are based on gender biases as opposed to semantic meaning.\\
Another work to be emphasized is \cite{kurita}, who proposed a bias test based on the masked language objective of BERT. Essentially, they query BERT for the probability to insert certain bias attributes in a masked sentence when associated with other neutral words. For instance they would compute the association between "programmer" and the male gender by probing BERT for the probability to replace "[MASK]" with "he" in "[MASK] is a programmer".\\
They show that this is an accurate way to measure biases in BERT and can show a correlation with bias in the GPR task by \cite{gap}.\\
However, the bias measured by \cite{kurita} is model-specific for BERT's masked-language modeling (MLM) models and measuring bias in downstream tasks is always task specific. Both might be influenced by fine-tuning for MLM or the specific task. This is fine as long as one wants to quantify bias with respect to this specific model or task, but might not be reliable when comparing embedding models or probing embedding models without a specific use case in mind. On the other hand, cosine based metrics work for any kind of embedding models, from classical word embeddings to contextualized and sentence embeddings, and hence are worth further investigating.

\subsubsection{Skew and Stereotype}
Recently, \cite{skew} introduced another definition of bias. They also used the GPR task by \cite{gap} to obtain word biases, but distinguish between two types of bias: Skew and stereotype. While stereotype shows the presence of groups stronger associated with different bias attributes than other groups, which is also the intuition behind \cite{weat}'s work, the skew describes the effect that all words from a set are (on average) biased towards the same attribute. For instance, occupations could be stereotypically associated with male/female terms, but it's also possible that (on average) all occupations are rather biased towards one gender. They further suggest that there is a certain trade-off between these two forms of bias.\\
Distinguishing these kinds of biases seems to be beneficial for investigating bias in embeddings more thoroughly. Hence, we apply their definition to construct cosine based skew and stereotype metrics (see Section \ref{sec:skew_stereo}) and show their benefit over general bias metrics in the experiments.

\subsection{Cosine-based measure}
\label{sec:cosine_scores}
Cosine-based proximity measures are the most used measures in terms of geometric relations. While it is possible to refer to other similarity/dissimilarity measures, mostly the cosine similarity
\begin{eqnarray}
\label{eq:cos}
cos(\bm{u}, \bm{v}) &=& \frac{\bm{u} \cdot \bm{v}}{||\bm{u}||\cdot||\bm{v}||}.
\end{eqnarray}
is used to determine the association between two vector representations of words $\bm{u}$ and $\bm{v}$.\\
Usually the similarity of words $\bm{w}$ towards groups of words (e.g.\ terms specific for one gender/ religion/ race...), so called attribute sets, is measured. The similarity of $\bm{w}$ and one set of attributes $A$ is then
\begin{eqnarray}
\label{eq:attr_sim}
s(\bm{w},A) &=& mean_{\bm{a} \in A} \cos(\bm{w},\bm{a}).
\end{eqnarray}

Based on this, different bias metrics are defined in literature. We detail the most prominent ones in the following sections, using the notations from the literature.

\subsubsection{WEAT} \label{sec:weat}

The Word Embedding Association Test (WEAT), as defined by \cite{weat}, is based on the Implicit Association Test (\cite{greenwald}), which is used in psychological studies to measure reaction times regarding pro and anti stereotypical associations. \\
The test compares two sets of target words $X$ and $Y$ with two sets of bias attributes $A$ and $B$ of equal size $n$. When observing gender stereotypes in occupations, which might be $A = \{he, man, male, ...\}$, $B = \{she, woman, female, ...\}$, $X = \{engineer, doctor, police man, ...\}$ and $Y = \{secretary, nurse, teacher, ...\}$, anticipating that the occupations in $X$ would be closer to words in $A$ than $B$, and vice versa for occupations in $Y$.\\
The association of a single word $\bm{w}$ with the bias attribute sets $A$ and $B$ including $n$ attributes each, is given by
\begin{eqnarray}
\label{eq:weat_attr_sim}
s(\bm{w},A,B) &=& \frac{1}{n} \sum_{\bm{a} \in A}\cos(\bm{w},\bm{a}) - \frac{1}{n} \sum_{\bm{b} \in B}\cos(\bm{w},\bm{b}).
\end{eqnarray}
To quantify bias in the sets $X$ and $Y$, the effect size is used, which is a normalized measure for the association difference between the target sets
\begin{eqnarray}
\label{eq:weat_eff_size}
d(X,Y,A,B) &=& \frac{mean_{\bm{x} \in X} s(\bm{x},A,B) - mean_{\bm{y} \in Y} s(\bm{y},A,B)}{stddev_{\bm{w} \in X \cup Y} s(\bm{w},A,B)},
\end{eqnarray}
where $mean_{\bm{x} \in X} s(\bm{x},A,B)$ refers to the mean of $s(\bm{x},A,B)$ with $\bm{x}$ in $X$ and $stddev_{\bm{x} \in X} s(\bm{x},A,B)$ to the standard deviation over all word biases of $\bm{x}$ in $X$.\\
A positive effect size confirms the hypothesis that words in $X$ are rather stereotypical for the attributes in $A$ and words in $Y$ stereotypical for words in $B$, while a negative effect size indicates that the stereotypes would be counter-wise.\\
Based on the Implicit Association Tests, the WEATs include several tests probing for associations between pleasant/unpleasant words and race as well as tests probing for gender stereotypes comparing career and family, math and arts as well as science and arts. To determine the statistical significance of biases measured in those tests, the authors use the test statistic
\begin{eqnarray}
\label{eq:weat_test_stat}
s(X,Y,A,B) &=& \sum_{\bm{x} \in X} s(\bm{x},A,B) - \sum_{\bm{y} \in Y} s(\bm{y},A,B)
\end{eqnarray}
for a permutation test with partitions $(X_i,Y_i)$ of $X \cup Y$:
\begin{eqnarray}
\label{eq:weat_perm}
p &=& P_r [ s(X_i,Y_i,A,B) > s(X,Y,A,B)].
\end{eqnarray}

There are several extensions to WEAT: For instance WEAT for gendered languages by \cite{weatgender} and the "Generalized WEAT" by \cite{weatgeneralized}, which allows to apply WEAT to more than two different attribute and target groups. \\ \cite{seat} propose the Sentence Encoder Association Test (SEAT), which basically applies WEAT to sentence representations. The sentences are obtained by inserting WEAT terms into simple templates such as "this is a <term>". They also propose two additional tests: The Angry Black Women Stereotype and the Double Bind tests. 

\subsubsection{MAC}
\label{sec:mac}
The Mean Average Cosine Similarity (MAC) score was introduced by \cite{mac} to provide a metric for multi-class fairness problems. They compute the bias of a word $\bm{t}$ towards an attribute set $A_j$ using the cosine distance as the reciprocal of the cosine similarity:
\begin{eqnarray}
\label{eq:mac_attr_sim}
S(\bm{t},A_j) = \frac{1}{N} \sum_{\bm{a} \in A_j} 1 - cos(\bm{t}, \bm{a}).
\end{eqnarray}
\noindent In contrast to WEAT, only one set of target words $T$ is utilized. The bias of words in $T$ towards several bias attribute sets $A = \{A_1, ..., A_n\}$ is then given by the MAC score as
\begin{eqnarray}
\label{eq:mac}
MAC(T,A) = \frac{1}{|T||A|} \sum_{\bm{t} \in T} \sum_{A_j \in A} S(\bm{t}, A_j).
\end{eqnarray}

Equation \eqref{eq:mac} applies a different intuition to WEAT and the Direct Bias (see Figure \ref{fig:bias_example} for the bias definition). However, MAC uses the direct associations instead of contrasting terms like \cite{weat} or measuring the correlation with a contrastive bias direction (or subspace) like \cite{bolukbasi}.
Most works use the MAC score only for multi-class problems that (the original) WEAT cannot be applied to. For instance, \cite{sentdebias} use it in addition to WEAT to evaluate their debiasing method.

\subsubsection{Direct Bias (Bolukbasi)} \label{sec:Bolukbasi-directbias}

\cite{bolukbasi} define the Direct Bias as the correlation of neutral words $\bm{w} \in N$ with a bias direction (in their example gender direction $\bm{g}$):
\begin{eqnarray}
\label{eq:direct_bias}
Direct Bias &=& \frac{1}{|N|} \sum_{\bm{w} \in N} |\cos(\bm{w},\bm{g})|^c 
\end{eqnarray}

with $c$ determining the strictness of bias measurement. The gender direction is either obtained by a gender word-pair e.g. $\bm{g} = \bm{he} - \bm{she}$ or - to get a more robust estimate - it is obtained by computing the first principal component over a set of individual gender directions from different word-pairs.\\
The authors used it as a preliminary experiment to show bias present in embeddings, but not to evaluate their debiasing algorithm (which follows the same intuition). The Direct Bias is applied e.g. in \cite{genderbias_underrepresentation} and \cite{costa2019evaluating}.\\
In terms of their debiasing algorithm \cite{bolukbasi} describe how to obtain a bias subspace given defining sets $D_1$, ..., $D_n$. A defining set $D_i$ includes words $\bm{w}$ that only differ by the bias relevant topic e.g. for gender bias $\{\bm{man},\bm{woman}\}$ could be used as a defining set. Given these sets, the authors construct individual bias directions $\bm{w} - \bm{\mu_i} \; \forall \bm{w} \in D_i, i \in \{1,...,n\}$ and $\bm{\mu_i} = \sum_{\bm{w} \in D_i} \frac{\bm{w}}{|D_i|}$. To obtain a k-dimensional bias subspace $B$ they compute the $k$ first principal components over these samples.
\cite{mac} applied this debiasing algorithm to mitigate multi-class bias. For this purpose they determined the bias subspace using defining sets $D_i$ with more than two elements. This could be also applied to the Direct Bias to measure multi-class bias.

\section{Requirements for Bias Metrics} 
\label{sec:req}
\subsection{Motivation}
\label{sec:req_motivation}
While WEAT is frequently used in the literature (\cite{weatgender, weatgeneralized, kaneko2021dictionary, sentdebias, karve2019conceptor, nullitout}), there does exist literature documenting its behaviour to be not always as expected, e.g. \cite{seat}, who emphasize that WEAT cannot prove the absence of bias, or \cite{ripa}, who already showed theoretical flaws of WEAT. Apart from this, we show further examples where WEAT, the Direct Bias and MAC fail to report biases.\\

To emphasize the need of formal requirements for bias scores, we look into how WEAT as the most prominent cosine based bias score is used in the literature. The authors (\cite{weat}) introduce WEAT to prove the presence of bias in word embeddings. However, there are other papers that use WEAT as a quantification of bias, either to compare biases in different embeddings (\cite{seat}) or to evaluate debiasing algorithms (\cite{sentdebias, nullitout, karve2019conceptor,  kaneko2021dictionary}). Those use cases require a bias score whose magnitude can be interpreted analogously to more or less bias. However, we present an example that shows that the WEAT effect size cannot be interpreted in that way. Therefor, its meaningfulness for bias quantification is severely limited.

Consider the example of gender bias with $A$ containing embeddings of male terms and $B$ containing those of female terms, and 6 occupations, half of those stereotypically male/female. All words were embedded with BERT. We computed the word-wise biases of 6 occupations according to WEAT's $s(\bm{w},A,B)$. The results are shown in Figure \ref{fig:weat_pretest} (the y-axis is added for better visibility and does not imply any difference). 

\begin{figure}[tb]
	\centering
	\includegraphics[scale=0.3]{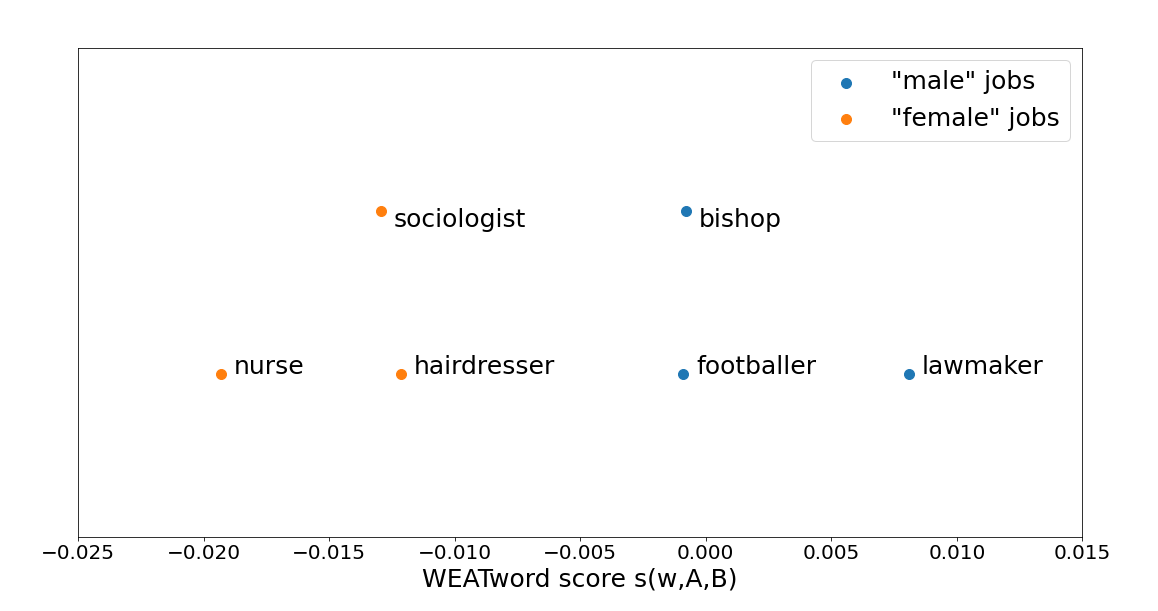}
	\caption{Bias of stereotypical male and female jobs as indicated by WEAT's $s(\bm{w},A,B)$ calculated on BERT embeddings. Differences along the y-axis are arbitrary shift for better visibility. }
	\label{fig:weat_pretest}
\end{figure}

We then used groups $X = \{\bm{footballer},\bm{bishop}\}$ and $Y = \{\bm{hairdresser}, \bm{sociologist}\}$ as target words and measured the effect size $d(X,Y,A,B) = 1.7299$, which indicates a high stereotype regarding the groups $X$ and $Y$.\\
Furthermore, we constructed groups $X' = \{\bm{footballer},\bm{lawmaker}\}$ and $Y' = \{\bm{hairdresser}, \bm{nurse}\}$, using the more extremely stereotyped occupations according to the word-wise bias scores. Now, when computing the effect size using $X'$ and/or $Y'$ instead of their counterparts, one would expect a bias metric to report an even larger bias with respect to these groups.\\
However, we found $d(X,Y',A,B) = -0.0865$ and $d(X',Y',A,B) = 1.6448e-05$, which would be interpreted as significantly lower biases, although looking at individual words we know that this is not the case.\\
This is just one example to illustrate how WEAT as the most prominent cosine based bias score can be misleading when quantifying biases. Hence, in the following sections, we derive formal requirements for bias scores to ensure their results are reliable and comparable.

\subsection{Formal Bias Definition and Notations}
\label{sec:formal_bias_def}
In Section \ref{sec:bias_definition} we described the geometrical definition of bias that is commonly used in the literature (e.g. \cite{weat} and \cite{bolukbasi}). However, this definition considers only the simple case of comparing one neutral word with two other words representing two protected groups. Thus, in this section, we will provide a formal definition of bias covering arbitrary numbers of words and groups to compare against.\\

Given a fairness critical concept like gender, religion or ethnicity, we select $n$ protected groups that might be subject to biases. Each protected group is defined by a set of attribute words $\bm{a_{ik}} \in A_i$ with $i \in \{1, ..., n\}$ the group's index. We summarize these attribute sets as $A = \{A_1, ..., A_n\}$. The intuition is that the attributes define the relation of protected groups by contrasting specifically over the membership to the different groups. Therefore, it is important that any attribute $\bm{a_{ik}} \in A_i$ has a counterpart $\bm{a_{jk}} \in A_j \; \forall \; A_j \in A, j \neq i$ that only differs from $\bm{a_{ik}}$ by the group membership. For instance, if we used $A_1 = \{she, female, woman\}$ as a selection of female terms, $A_2 = \{he, male, man\}$ would be the proper choice of male terms.

Given these attribute sets the association of a words $\bm{w}$ with a protected group is given by $s(\bm{w},A_i)$ (Equation \ref{eq:attr_sim}). We are particularly interested in the difference of associations towards the different groups, i.e. is $\bm{w}$ more similar to one protected groups than the others. Whether such associations are harmful depends on whether $\bm{w}$ is theoretically neutral to the protected groups. For example, terms like "aunt" or "uncle" are associated with one or the other gender per definition, while a term like "transsexual" is associated with the overall concept of gender but should not be stronger associated with female terms than male terms.

Noteworthy, by words and attributes we refer to vectorial representations of words in a $d$-dimensional embedding space, i.e. let $V = \{v_1, ..., v_m\}$ be a vocabulary, then an embedding model $E : V \rightarrow \mathbb{R}^d$ transforms words $v$ into $d$-dimensional word vectors $\bm{w}$.

As baseline for our bias score requirements and the following analysis of bias scores from the literature, we suggest two intuitive definitions of bias for single words $\bm{w}$ and sets of words $W$.
For words $\bm{w}$ we apply the intuition of WEAT extended to $n$ protected groups instead of only two.

\begin{definition}[\wordbias]
	\label{def:word_bias}
	Given $n$ protected groups represented by attribute sets $A_1$, ..., $A_n$ and a word $\bm{w}$ that is theoretically neutral to these groups, we consider $\bm{w}$ biased if 
	\begin{align}
	\label{eq:word_bias}
    \exists A_i, A_j \in A: s(\bm{w},A_i) > s(\bm{w},A_j)
	\end{align}
\end{definition}

\begin{definition}[\setbias]
	\label{def:set_bias}
	Given $n$ protected groups represented by attribute sets $A_1$, ..., $A_n$ and a set of word $W$ containing only words that are theoretically neutral to these groups, we consider $W$ biased if at least one word $\bm{w} \in W$ is biased:
	\begin{align}
	\label{eq:set_bias}
    \exists A_i, A_j \in A, \bm{w} \in W: s(\bm{w},A_i) > s(\bm{w},A_j)
	\end{align}
\end{definition}

The idea behind Definition \ref{def:set_bias} is that even when looking at sets of words, each individual word's bias is important, i.e. as long as there is one biased word in the set, we cannot call the set of words unbiased, even if word biases cancel out on average or the majority of words is unbiased.

Apart from this general notion of bias, we further introduce the definition of skew and stereotype bias from \cite{skew} to geometrical biases. These two types of bias describe how a set of words $W$ is biased. The skew describes whether all words are (on average) biased towards on protected group, while the stereotype describes the occurrence of stereotypical groups associated with different protected groups. 

\begin{definition}[\skewbias]
	\label{def:skew_bias}
	Given $n$ protected groups represented by attribute sets $A_1$, ..., $A_n$ and a set of word $W$, we consider $W$ skewed towards a group $i$ compared to group $j$ if
	\begin{align}
	\label{eq:skew_bias}
    mean_{\bm{w} \in W} s(\bm{w},A_i) > mean_{\bm{w} \in W} s(\bm{w},A_j)
	\end{align}
\end{definition}

\begin{definition}[\stereobias]
	\label{def:stereo_bias}
	Given $n$ protected groups represented by attribute sets $A_1$, ..., $A_n$ and a set of word $W$, we consider $W$ containing stereotypes if:
	\begin{align}
	\label{eq:stereo_bias}
    \exists A_i, A_j \in A, \bm{w_1}, \bm{w_2} \in W: s(\bm{w_1},A_i) - s(\bm{w_1},A_j) \neq s(\bm{w_2},A_i) - s(\bm{w_2},A_j)
	\end{align}
\end{definition}

Note that our definition of stereotype does not require the words in $W$ to form distinct stereotypical groups but also considers a continuous distribution of word biases stereotypical as long as there are words closer to a certain group and other words closer to another groups. Therefore, our stereotype definition is not limited to extreme cases with fundamentally different groups but is also sensitive to individual words being stereotypical for certain protected groups.
Yet, the greater the differences or the more distinct groups appear, the higher is the stereotype considered.

In the following we will use a notation for bias score functions in general: $b(\bm{w},A)$ measuring the bias of one word. For sets of words, there are two notations reflecting two different strategies presented in the literature: Bias scores measuring bias over all neutral words jointly (MAC and Direct Bias) will be written as $b(W,A)$ and bias scores measuring the bias over two groups of neutral words $X,Y \subset W$ (WEAT), will be written as $b(X,Y,A)$. Since the bias scores from the literature have different extreme values and different values indicating no bias, we use the following notations: $b_{min}$ and $b_{max}$ are the extreme values of $b(\cdot)$ and $b_0$ is the value of $b(\cdot)$ that means $\bm{w}$ or $W$ is unbiased. Note that $b_{min}$ and $b_0$ are not necessarily equal.
This results in the following notations for WEAT, MAC and the Direct Bias:

(1) For WEAT $b_{WEAT}(\bm{w},A) = s(\bm{w},A_1, A_2)$ and $b_{WEAT}(W,A) = d(X,Y,A_1,A_2)$ with $A = \{A_1, A_2\}$ limited to two attribute sets and $X \cup Y = W$, $X \cap Y = \emptyset$ two predefined groups of the neutral words. $b_{min} = -1$, $b_{max} = 1$ and $b_0 = 0$\\
(2) For MAC $b_{MAC}(\bm{w},A) = \frac{1}{|A|} \sum_{A_j \in A} S(\bm{w}, A_j)$ (MAC score for $W = \{\bm{w}\}$ and $b_{MAC}(W,A) = MAC(W,A)$ (in the MAC notation, $T$ is corresponding to $W$).  $b_{min} = 0$, $b_{max} = 2$ and $b_0 = 1$\\
(3) For the Direct Bias $b_{DirectBias}(\bm{w},A) = |\cos(\bm{w},\bm{g})|^c $ and $b_{DirectBias}(W,A) = \frac{1}{|W|} \sum_{\bm{w} \in W} |\cos(\bm{w},\bm{g})|^c$ (in the Direct Bias notation (Equation \ref{eq:direct_bias}) $N$ is used instead of $W$). Here $\bm{g}$ refers to the bias direction, which is obtained using two attribute sets $A_1$ and $A_2$ as described in Section \ref{sec:Bolukbasi-directbias}. Hence, $A = \{A_1, A_2\}$ is limited to two attribute sets. $b_{min} = 0$, $b_{max} = 1$ and $b_0 = 0$.

\subsection{Requirements for Bias Metrics}
\label{sec:requirements}

Based on the definitions of bias explained in Section \ref{sec:formal_bias_def}, we introduce two novel properties which a bias score function should fulfill to quantify bias in a meaningful way: \emph{trustworthiness} and \emph{magnitude-comparability}. Both properties will be explained and formally defined in the following.
The goal of both properties is to ensure that biases can be quantified in a way such that bias scores can be safely compared between different embedding models and debiasing methods can be evaluated without risking to overlook bias. We acknowledge that there are also use cases where these properties can be neglected, for example the tests of \cite{weat}. Nevertheless, there plenty of statements in the literature (e.g. \cite{seat}) that suggest biases being overlooked by state-of-the-art bias scores, so it is highly relevant to determine which bias scores suffice these properties and which do not.
Furthermore, in Section \ref{sec:skew_stereo} we propose sensitivity to skew and stereotype as two additional criteria that allow a closer look into what kinds of biases certain bias scores can detect.

\subsubsection{Comparability}

The goal of our first property, \emph{magnitude-comparability}, is to ensure that bias scores are comparable between different embeddings. This is necessary to make statements about embedding models being more or less biased than others, which includes comparing debiased embeddings with their original counterparts.
A necessary condition for such comparability is the possibility to reach the extreme values of $b_{min}$ and $b_{max}$ of $b(\cdot)$ in different embedding spaces depending only on the neutral words.

More formally, assume a word embedding is fixed with embedding space E. Considering $W$, a set of words in $E$ and $A_i \in A$ sets of attributes in $E$, a bias score function $b(W,A)$ maps $W$ and the set of attribute sets $A$ to a real number. The set of words $W$ could also be divided into subsets $X$ and $Y$, corresponding to two attribute sets $A$ and $B$ (as done in WEAT). We propose the following  requirements:

\begin{definition}[\extremaVgl]
	\label{def:max_amplitutde}
	We call the bias score function $b(W,A)$ \extremaVgl\ if, for a fixed number of target words in set $W$ (including the case $W = \{\bm{w}\}$, the maximum bias score $b_{max}$ and the minimum bias score $b_{min}$ are independent of the attribute sets in A:
	\begin{align}
	\label{eq:max_min_value}
	\max_{W, |W|=const} b(W,A) = b_{max} , \quad 
	\min_{W, |W|=const} b(W,A) = b_{min} \; \forall \;  A.
	\end{align}
\end{definition}

\subsubsection{Trustworthiness}
The second novel property of \emph{trustworthiness} defines whether we can trust a bias score to report any bias in accordance to definitions \ref{def:word_bias} and \ref{def:set_bias}, i.e. the bias score can only reach $b_0$, which indicates fairness, if the observed word is equidistant to all protected groups or all words in the observed set of words are unbiased. This is important, because even if a set of words is mostly unbiased or word biases cancel out on average, individual biases can still be harmful and should thus be detected.
The requirement for the consistency of the minimal bias score $b_0$s can be formulated in a straight forward way using the similarities to the attribute sets $A_i$.

\begin{definition}[\minVgl]
	\label{def:min_max_bias}
	Let $b_0$ be the bias score of a bias score function, that is equivalent to no bias being measured.
	We call the bias score function $b(\bm{w},A)$ 
	\minVgl\ if 
	\begin{align}
	\label{eq:cond_neutral}
	b(\bm{w},A) = b_0 \iff s(\bm{w},A_i) = s(\bm{w},A_j) \; \forall \;  A_i, A_j \in A.
	\end{align}
	Analogously for scores that use at set of words $W = \{ \bm w_1, ..., \bm w_m\}$,
	we say $b(W,A)$ is \minVgl\ if
	\begin{align}
	\label{eq:cond_neutral2}
	b(W,A) = b_0 \iff s(\bm{w}_k,A_i) = s(\bm{w}_k,A_j) \; \forall \;  A_i, A_j \in A, k\in\{1,...,m\}.
	\end{align}
\end{definition}

\subsubsection{Sensitivity to Skew and Stereotype}

We formulate two additional criteria in alignment to the skew and stereotype definitions. Since the requirements introduced in \ref{sec:requirements} are critical to achieve comparable and trustworthy measurements in terms of a bias magnitude, the following definitions are simply meant to distinguish what type of biases a bias score function is sensitive towards.

\begin{definition}[\skewsensitive]
	\label{def:skew_sensitive}
	Let $b_0$ be the value of a bias score function, that is equivalent to no bias being measured.
	We call the bias score function $b(W,A)$ 
	\skewsensitive\ if 
	\begin{align}
	\label{eq:cond_skew}
	& \exists A_i, A_j \in A: mean_{\bm{w} \in W} s(\bm{w},A_i) > mean_{\bm{w} \in W} s(\bm{w},A_j) \nonumber  \\
	\Rightarrow\ & b(W,A) \neq b_0. 
	\end{align}
\end{definition}

In lay terms, skew-sensitivity reflects the fact that a bias score should increase if neutral words are (on average) more similar to one protected group than another.

\begin{definition}[\stereosensitive]
	\label{def:stereo_sensitive}
	Let $b_0$ be the value of a bias score function, that is equivalent to no bias being measured.
	We call the bias score function $b(W,A)$ 
	\stereosensitive\ if 
	\begin{align}
	\label{eq:cond_stereo}
	& \exists \bm{w_1},\bm{w_2} \in W, A_i, A_j \in A:  s(\bm{w_1},A_i)-s(\bm{w_1},A_j) \neq  s(\bm{w_2},A_i)-s(\bm{w_2},A_j) \nonumber \\
	\Rightarrow\ & b(W,A) \neq b_0.
	\end{align}
	
\end{definition}

In lay terms, this definition refers to the fact that an embedding, which places words differently with respect to its relative distance as regards to (opposing) protected groups is biased as regards the axis which is described by these two groups.

As stated in the following theorem, bias score functions that are \minVgl\ are both \skewsensitive\ and \stereosensitive. From there follows that a metric that is not \skewsensitive\ or not \stereosensitive\ cannot be \minVgl .

\begin{theorem}
\label{theor:minVgl_skew_stereo}
	A bias score function $b(W,A)$ that is \minVgl\ is also \skewsensitive\ and \stereosensitive .
\end{theorem}
\begin{proof}
\label{proof:minVgl_skew_stereo}
Let $W$ be skewed with regard to attributes in $A$, then, according to Definition \ref{def:skew_bias},
\begin{eqnarray}
\exists A_i, A_j \in A: mean_{\bm{w} \in W} s(\bm{w},A_i) > mean_{\bm{w} \in W} s(\bm{w},A_j).
\end{eqnarray}
In that case 
\begin{eqnarray}
\exists w \in W, A_i, A_j \in A:  s(\bm{w},A_i) \neq s(\bm{w},A_j).
\end{eqnarray}
Further, let $b(W,A)$ be a \minVgl\ bias score function. From Definition \ref{def:min_max_bias} directly follows that $b(W,A) \neq b_0$ given a skewed set of words $W$.

Analogously, for $W = W_1 \cup W_2$ with $W_1$ and $W_2$ reflecting different stereotypes with regard to (at least) two attributes $A_i$, $A_j \in A$ and $b(W,A)$ an \minVgl\ bias score function:
\begin{eqnarray}
& \exists \bm{w_1} \in W_1, \bm{w_2} \in W_2:  s(\bm{w_1},A_i)-s(\bm{w_1},A_j) \neq  s(\bm{w_2},A_i)-s(\bm{w_2},A_j) \\
\Rightarrow & \exists w \in \{\bm{w_1}, \bm{w_2}\}:  s(\bm{w},A_i) \neq s(\bm{w},A_j) \\
\Rightarrow & b(W,A) \neq b_0
\end{eqnarray}
\end{proof}

\subsection{Properties of bias scores from the literature}

As a major contribution of this work, we apply the novel properties for bias score functions to all cosine based bias scores from the literature. Table \ref{tab:overview} gives an overview over our findings. The detailed analyses follow in Section \ref{sec:weat_analysis} for WEAT, Section \ref{sec:analysis_mac} for MAC and Section \ref{sec:analysis_db} for the Direct Bias.

\begin{table}[t] 
\centering
\caption{Overview over the properties of bias scores.}
\label{tab:overview}
\begin{tabular}{ ccc ccc cc}
\toprule
bias score & comparable & trustworthy & bias score & comparable & trustworthy & skew & stereotype \\
\cmidrule(r){1-3} \cmidrule(l){4-8}
\weatw &  x & \checkmark & \weat &  \checkmark & x & x & (\checkmark)  \\
\macw & x & x & \mac &  x & x & x & x\\
\dbw & \checkmark & x & \db &  \checkmark & x & x & x \\
\bottomrule
\end{tabular}
\end{table}

\subsubsection{Analysis of the WEAT Score}
\label{sec:weat_analysis}
In the following, we detail properties of the WEAT score in light of the definitions stated above. First, we focus on the word-wise biases as reported by $s(\bm{w},A,B)$.

\begin{theoremrep}
\label{theor:weat_s_extrema}
	The bias score function $s(\mathbf{w},A,B)$ of WEAT is not \extremaVgl.
\end{theoremrep}
\begin{inlineproof}
For the proof see Section \ref{proof:weat_s_extrema} in the supplementary material.
\end{inlineproof}
\begin{toappendix}
\begin{proof}
\label{proof:weat_s_extrema}
With $\mathbf{\hat{a}} = \frac{1}{|A|} \sum_{\mathbf{a} \in A} \frac{\mathbf{a}}{||\mathbf{a}||}$ and $\mathbf{\hat{b}}$ analogously defined, we can rewrite
\begin{align}
    s(\mathbf{w},A,B) &=& \frac{\mathbf{w} \cdot \mathbf{\hat{a}}}{||\mathbf{w}||} - \frac{\mathbf{w} \cdot \mathbf{\hat{b}}}{||\mathbf{w}||} \\
    &=& \frac{\mathbf{w}}{||\mathbf{w}||} \cdot \Big(\mathbf{\hat{a}} - \mathbf{\hat{b}} \Big) \\
    &=& cos(\mathbf{w},\mathbf{\hat{a}} - \mathbf{\hat{b}}) ||\mathbf{\hat{a}} - \mathbf{\hat{b}}||.
\end{align}
Hence we can show that the extrema depend on the attribute sets $A$ and $B$:
\begin{align}
    max_{\mathbf{w}} s(\mathbf{w},A,B) = ||\mathbf{\hat{a}} - \mathbf{\hat{b}}||, \\
    min_{\mathbf{w}} s(\mathbf{w},A,B) = -||\mathbf{\hat{a}} - \mathbf{\hat{b}}||
\end{align}
The statement follows.
\end{proof}
\end{toappendix}

\begin{figure}[tb]
	\centering
	\includegraphics[scale=0.2]{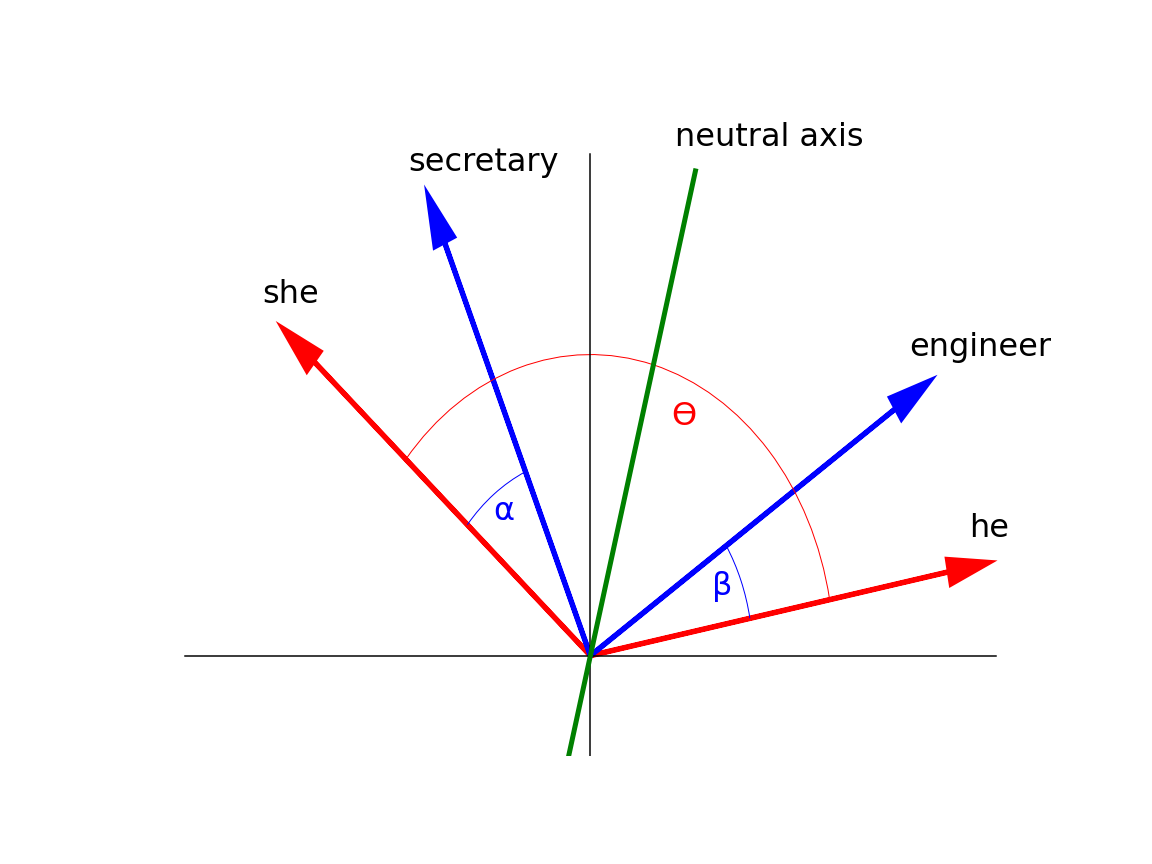}
	\caption{Simplified visualization of gender stereotypes in embeddings. The occupation vectors are biased if $0 < \alpha < \frac{\theta}{2}$ or $0 < \beta < \frac{\theta}{2}$.}
	\label{fig:bias_example_angles}
\end{figure}

\begin{theoremrep}
\label{theor:weat_s_min}
	The bias score function $s(\bm{w},A,B)$ of WEAT is \minVgl.
\end{theoremrep}
\begin{inlineproof}
For the proof see Appendix \ref{proof:weat_s_min}.
\end{inlineproof}
\begin{toappendix}
\begin{proof}
\label{proof:weat_s_min}
This follows directly from the definition of $s(\bm{w},A,B)$ (equation \eqref{eq:weat_attr_sim}):
\begin{align}
& s(\bm{w},A,B) = s(\bm{w},A) - s(\bm{w},B) = 0 \\
\iff & s(\bm{w},A) = s(\bm{w},B)
\end{align}
\end{proof}
\end{toappendix}

Next, we focus on the properties of the effect size $d(X,Y,A,B)$. Note that the effect size is not specified for cases, where  $s(\bm{w},A,B) = s(\bm{w'},A, B) \forall \bm{w}, \bm{w'} \in X \cup Y$ due to its denominator. This is highly problematic considering Definition \ref{def:min_max_bias}, which states that a bias score should be $0$ in that specific case (which implies perfect fairness).
Furthermore, the following theorem shows that WEAT can report no bias even if the embeddings contain associations with the bias attributes.

\begin{theorem}
\label{theor:weat_d_min}
	The effect size $d(X,Y,A,B)$ of WEAT is not \minVgl.
\end{theorem}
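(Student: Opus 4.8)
The plan is to disprove the biconditional in Definition~\ref{def:min_max_bias} by exhibiting a configuration in which the effect size attains its neutral value $b_0 = 0$ while the underlying word-wise biases are \emph{not} all neutral, i.e.\ $s(\bm{w},A) \neq s(\bm{w},B)$ for some $\bm{w} \in X \cup Y$. First I would unpack when $d(X,Y,A,B) = 0$: assuming the denominator $stddev_{\bm{w} \in X \cup Y} s(\bm{w},A,B)$ is finite and nonzero, the effect size vanishes exactly when its numerator does, that is, when $mean_{\bm{x} \in X} s(\bm{x},A,B) = mean_{\bm{y} \in Y} s(\bm{y},A,B)$. The key observation is that this equality of group means is strictly weaker than the requirement that every word-wise bias $s(\bm{w},A,B)$ be zero, which (for the two attribute sets $A$ and $B$) is what \minVgl\ demands at $b_0$.

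Next I would construct an explicit counterexample. Take $|X| = |Y| = 2$ and target words whose WEAT word-wise biases are $s(\bm{x}_1,A,B) = c_1$, $s(\bm{x}_2,A,B) = c_2$ and $s(\bm{y}_1,A,B) = c_1$, $s(\bm{y}_2,A,B) = c_2$, with $c_1 \neq c_2$ and $c_1,c_2$ not both zero. Then $mean_{\bm{x} \in X} s = mean_{\bm{y} \in Y} s = (c_1+c_2)/2$, so the numerator and hence $d$ are exactly $0$; meanwhile the four word biases $c_1,c_2,c_1,c_2$ have nonzero spread, so the denominator is nonzero and $d$ is well defined. Since at least one $c_i \neq 0$, there is a word with $s(\bm{w},A) \neq s(\bm{w},B)$, and the left-to-right direction of \minVgl\ fails. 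This mirrors the empirical behaviour already observed in Section~\ref{sec:req_motivation}, where $d(X',Y',A,B) \approx 1.6448 \times 10^{-5}$ despite strongly biased occupations; the construction here simply makes the coincidence exact.

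The main obstacle I expect is \emph{realizability}: the levels $c_1,c_2$ must be produced by genuine embedding vectors through the cosine-based definition of $s(\bm{w},A,B)$, not merely postulated. To close this gap I would argue that the word-wise bias is a continuous function of the position of $\bm{w}$ on the unit sphere and, as the analysis underlying Theorem~\ref{theor:weat_s_extrema} already shows, sweeps through an interval of values (of both signs) as $\bm{w}$ rotates relative to the attribute sets; by the intermediate value theorem one can then select four vectors hitting the two target levels. A cleaner alternative, which I would prefer, is to specialise to the simplest nontrivial instance $A = \{\bm{she}\}$, $B = \{\bm{he}\}$ and place all four target vectors in the plane spanned by $\bm{he}$ and $\bm{she}$ at angles chosen so that two of them share one bias value and the other two a different value (exactly as parametrised by $\alpha$, $\theta$ in Figure~\ref{fig:bias_example_angles}). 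This keeps the counterexample fully concrete and two-dimensional, avoids any appeal to high-dimensional existence arguments, and directly yields $d = 0$ with manifestly biased words.
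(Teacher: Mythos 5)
Your proposal is correct and takes essentially the same approach as the paper's proof: both make the effect size's numerator vanish by pairing words across $X$ and $Y$ with equal word-wise biases at two distinct levels $c_1 \neq c_2$ (keeping the denominator nonzero and $d$ well defined), while at least one nonzero level witnesses $s(\bm{w},A) \neq s(\bm{w},B)$. Your realizability argument via explicit placement in the plane spanned by $\bm{he}$ and $\bm{she}$ matches the paper's, which establishes existence through the orthogonality condition $(\bm{a}-\bm{b})\cdot(\bm{w}_1-\bm{w}_3)=0$ and the two-dimensional example of Figure~\ref{fig:bias_example_weat}.
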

\begin{proof}
For the WEAT score $b_0 = 0$. 


For four words $\bm{w}_1, \bm{w}_2, \bm{w}_3, \bm{w}_4$ and $s(\bm{w}_1,A,B) = s(\bm{w}_3,A,B)$ and $s(\bm{w}_2,A,B) = s(\bm{w}_4,A,B)$, the effect size
\begin{align}
\label{eq:eff_size}
& d(\{\bm{w}_1,\bm{w}_2\},\{\bm{w}_3,\bm{w}_4\},A,B) = \\ 
& \frac{(s(\bm{w}_1,A,B) + s(\bm{w}_2,A,B)) - (s(\bm{w}_3,A,B) + s(\bm{w}_4,A,B))}{2 \cdot \mathrm{stddev}_{\bm{w} \in \{\bm{w}_1,\bm{w}_2,\bm{w}_3, \bm{w}_4\}} s(\bm{w},A,B)}
\end{align}
is $0$, if $s(\bm{w}_1,A,B) \neq s(\bm{w}_2,A,B)$ (otherwise $d$ is not defined). 
Now, for the simple case $A=\{\bm{a}\}, B=\{\bm{b}\}$ and assuming all vectors having length $1$, we see
\begin{align}
&s(\bm{w}_1,A,B) = s(\bm{w}_3,A,B) \nonumber \\
\iff & \bm{a}\cdot\bm{w}_1 - \bm{b}\cdot\bm{w}_1 = \bm{a}\cdot\bm{w}_3 - \bm{b}\cdot\bm{w}_3 \nonumber \\
\iff & \bm{a}\cdot(\bm{w}_1 - \bm{w}_3) - \bm{b}\cdot(\bm{w}_1 - \bm{w}_3) = 0 \nonumber \\
\iff & (\bm{a} - \bm{b})\cdot(\bm{w}_1 - \bm{w}_3) = 0.
\end{align}
This implies that, if the two vectors $\bm{a} - \bm{b}$ and $\bm{w}_1 - \bm{w}_3$ are orthogonal (and e.g.\ $s(\bm{w}_2,A,B)=0$), the WEAT score returns 0. In this case, there exist $\bm{a}, \bm{b}, \bm{w}_1, \bm{w}_3$ with $s(\bm{w}_1,A,B) = s(\bm{w}_3,A,B) \neq 0$ and accordingly $s(\bm{w}_1,A) \neq s(\bm{w}_1,B)$.
See Figure \ref{fig:bias_example_weat} for an example, with $\bm{a} =$ he, $\bm{b} =$ she, $\bm{w}_1 =$ engineer, $\bm{w}_3 =$ secretary.
\end{proof}

\begin{figure}[tb]
	\centering
	\includegraphics[scale=0.2]{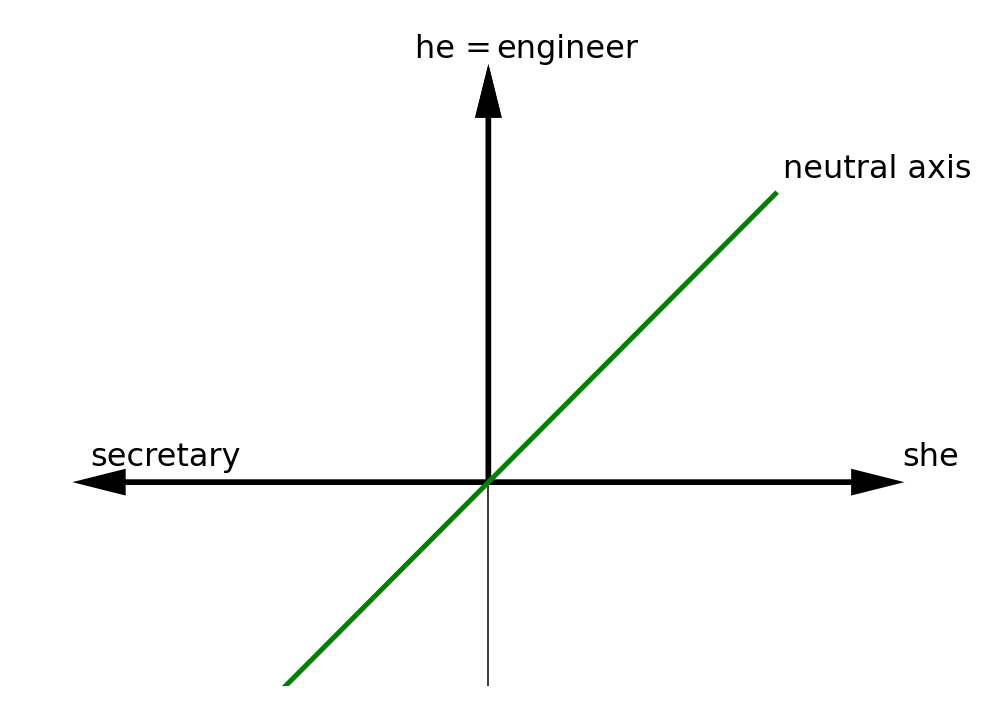}
	\caption{This illustrates a case, where WEAT would report no bias since secretary and engineer have the same relative similarity towards he and she.}
	\label{fig:bias_example_weat}
\end{figure}

In order to show that the effect size of WEAT is \extremaVgl, we need the following lemma.

\begin{lemmarep}\label{theor:schrankeAllg}
Let $x_1,...,x_n \in \R$ be real numbers. Let $\hat{\mu},\hat{\sigma}$ denote the empirical estimate of mean and standard deviation of the $x_i$. Then, for any selection of indices $i_1,...,i_m$, with $i_j \neq i_k$ for $j \neq k$, the following bound holds
\begin{align*}
    \left| \sum_{j = 1}^m \frac{x_{i_j} - \hat{\mu}}{\hat{\sigma}} \right| \leq \sqrt{m \cdot (n-m)}.
\end{align*}
Furthermore, for $0 < m < n$ the bound is obtained if and only if all selected resp. non-selected $x_i$ have the same value, i.e. $x_{i_j} = \hat{\mu} + s\sqrt{\frac{n-m}{m}}\hat{\sigma}$ and all other $x_k = \hat{\mu} -s\sqrt{\frac{m}{n-m}}\hat{\sigma}$ with $s \in \{-1,1\}$.
\end{lemmarep}
\begin{inlineproof}
For the proof see Appendix \ref{proof:schrankeAllg}.
\end{inlineproof}
\begin{toappendix}
\begin{proof}
\label{proof:schrankeAllg}
For cases $m = 0$ or $m = n$ the statement is trivial. So assume $0 < m < n$.
\newcommand{\x}{\bm{x}}
\newcommand{\s}{\bm{s}}
Let $f(x) = ax+b$ be an affine function. Then the images of $x_i$ under $f$ have mean $a\hat{\mu}+b$ and standard deviation $|a|\hat{\sigma}$. On the other hand, we have
\begin{align*}
    \frac{f(x_i)-(a\hat{\mu}+b)}{|a|\hat{\sigma}}
    &= \frac{(ax_i+b)-(a\hat{\mu}+b)}{|a|\hat{\sigma}}
    = \text{sgn}(a)\frac{x_i-\hat{\mu}}{\hat{\sigma}}.
\end{align*}
Thus, applying $f$ does not change the bound and therefore we may reduce to case of $\hat{\mu} = 0$ and $\hat{\sigma} = 1$. This allows us to rephrase the problem of finding the maximal bound as an quadratic optimization problem:
\begin{align*}
    \min \quad& \s^\top \x \\
    \text{s.t.} \quad & \x^\top \x = n\\
                      & \bm{1}^\top \x = 0,
\end{align*}
where $\s = (1,...,1,0,...,0)^\top$, $\x = (x_1,...,x_n)^\top$ and $\bm{1}$ denotes the vector consisting of ones only. Notice, that we assumed w.l.o.g. that $i_1,...,i_m = 1,...,m$. Furthermore, we made use of the symmetry properties to replace $\max |\s^\top \x|$ by the minimizing statement above, $\hat{\mu} = 0$ is expressed by the last and $\hat{\sigma} = 1$ by the first constrained (recall that $\hat{\sigma} =  \sqrt{1/n \x^\top \x-\hat{\mu}^2}$).
Notice, that $\nabla_x \x^\top\x-n = 2\x$ and $\nabla_x \bm{1}^\top \x = \bm{1}$ are linear dependent if and only if $\x = a \bm{1}$ for some $a \in \R$, thus, as $0 = a \bm{1}^\top \bm{1} = an$ if and only if $a = 0$ and $(0\bm{1})^\top (0\bm{1}) = 0$, there is no feasible $\x$ for which the KKT-conditions do not hold and we may therefore use them to determine all the optimal points.

The Lagrangien of the problem above and its first two derivatives are given by
\begin{align*}
    L(\x,\lambda_1,\lambda_2) &= \s^\top \x - \lambda_1 (\x^\top I \x - n) - \lambda_2 \bm{1}^\top \x \\
    \nabla_x L(\x,\lambda_1,\lambda_2) &= \s - 2 \lambda_1 \x - \lambda_2 \bm{1} \\
    \nabla^2_{x,x} L(\x,\lambda_1,\lambda_2) &= - 2 \lambda_1 I.
\end{align*}
We can write $\nabla_x L(\x,\lambda_1,\lambda_2) = 0$ as the following linear equation system:
\begin{align*}
    \begin{bmatrix} 2x_1 & 1 \\ 2x_2 & 1 \\ \vdots & \vdots \\ 2x_n & 1 \end{bmatrix} \begin{bmatrix} \lambda_1 \\ \lambda_2 \end{bmatrix} &= \underbrace{\begin{bmatrix} 1 \\ 1 \\ \vdots \\ 0 \end{bmatrix}}_{=\s}.
\end{align*}
Subtracting the first row from row $2,...,m$ and row $m+1$ from row $m+2,...,n$ we see that $2(x_k-x_1)\lambda_1 = 0$ for $k=1,...,m$ and $2(x_k-x_{m+1})\lambda_1 = 0$ for $k = m+2,...,n$, which either implies $\lambda_1 = 0$ or $x_1=x_2 = ... = x_m$ and $x_{m+1}=x_{m+2} = ... = x_n$. However, assuming $\lambda_1 = 0$ would imply that $\lambda_2 = 1$ from the first row and $\lambda_2 = 0$ from the $m+1$th row, which is a contradiction. Thus, we have  $x_1=x_2 = ... = x_m$ and $x_{m+1}=x_{m+2} = ... = x_n$. But the second constraint from the optimization problem can then only be fulfilled if $mx_1+(n-m)x_{m+1} = 0$ and this implies $x_{m+1} = -\frac{m}{n-m}x_1$. In this case the first constraint is equal to $n = m x_1^2 + (n-m) \left(\frac{m}{n-m} x_1\right)^2$
, which has the solution $x_1 = \pm\sqrt{\frac{n-m}{m}}$.

Set $\x^* = (-\sqrt{\frac{n-m}{m}},...,-\sqrt{\frac{n-m}{m}},\sqrt{\frac{m}{n-m}},...,\sqrt{\frac{m}{n-m}})$. Then $\x^*$ and $-\x^*$ are the only possible KKT points as we have just seen. Plugging $\x^*$ into the equation system above and solving for $\lambda_{1/2}$ we obtain
\begin{align*}
    \lambda_1^* &= -\frac{1}{2 \left(\sqrt{\frac{m}{n - m}} + \sqrt{\frac{n - m}{m}}\right)} \\
    \lambda_2^* &= \frac{\sqrt{\frac{m}{n - m}}}{\sqrt{\frac{m}{n - m}} + \sqrt{\frac{n-m}{m}}}
\end{align*}
Now, as $\nabla^2_{x,x} L(\x^*,\lambda_1^*,\lambda_2^*) = \left(\sqrt{\frac{m}{n - m}} + \sqrt{\frac{n - m}{m}}\right)^{-1} I$ is positive definite, we see that $\x^*$ is a global optimum, indeed. The statement follows.
\end{proof}
\end{toappendix}

\begin{theoremrep}
\label{theor:weat_extrema}
The effect size $d(X,Y,A,B)$ of WEAT with $X=\{\bm{x}_1, \ldots,\bm{x}_m\}, Y=\{\bm{y}_1, \ldots,\bm{y}_m\}$ is \extremaVgl.
\end{theoremrep}
\begin{inlineproof}
For the proof see Appendix \ref{proof:weat_extrema}.
\end{inlineproof}
\begin{toappendix}
\begin{proof}
\label{proof:weat_extrema}
With $c_i = s(\bm{x_i},A,B)$, $c_{i+m} = s(\bm{y_i},A,B)$, $n=2m$, $\hat{\mu}=1/n \sum^{n}_{i=1} c_i$ and $\hat{\sigma}=\sqrt{1/n\sum^{n}_{i=1} (c_i - \mu)^2}$, we have
\begin{align}
    d &= \frac{1/m \sum^m_{i=1} c_i - 1/m \sum^{2m}_{i=m+1} c_i}{\hat{\sigma}} \\
      &= \frac{\sum^m_{i=1} c_i - \sum^{2m}_{i=m+1} c_i + \sum^m_{i=1} c_i - \sum^m_{i=1} c_i}{m\hat{\sigma}} \nonumber \\
      &= \frac{2\sum^m_{i=1} c_i - 2m\hat{\mu}}{m\hat{\sigma}} \nonumber \\
      &= \frac{2}{m} \sum^m_{i=1}\frac{c_i - \hat{\mu}}{\hat{\sigma}} \in[-2,2]
\end{align}
where the last statement follows from Lemma \ref{theor:schrankeAllg} with $\sum^m_{i=1}\frac{c_i - \hat{\mu}}{\hat{\sigma}} \in [-m,m]$. The extreme value $\pm 2$ is reached if $c_1=\ldots =c_m=-c_{m+1}=\ldots =-c_{2m}$, which can be obtained by setting $\bm{x}_1=\ldots =\bm{x}_m=-\bm{y}_1=\ldots =-\bm{y}_m$, independently of $A$ and $B$ as long as $A \neq B$ and $\sum_{\bm{a}_i \in A} \bm{a}_i/\|\bm{a}_i\| \neq 0 \neq \sum_{\bm{b}_i \in B} \bm{b}_i/\|\bm{b}_i\|$.
\end{proof}
\end{toappendix}

The proof of Theorem \ref{theor:weat_extrema} shows that the effect size reaches its extreme values only if all $x \in X$ achieve the same similarity score $s(\bm{x},A,B)$ and $s(\bm{y},A,B) = -s(\bm{x},A,B) \; \forall \; y \in Y$, i.e. the smaller the variance of $s(\bm{x},A,B)$ and $s(\bm{y},A,B)$ the higher the effect size. For one thing, this limits WEAT's predictability to the partitioning of neutral words into $X$ and $Y$. On the other hand, the differences $mean_{\bm{x} \in X} s(\bm{x},A,B) - mean_{\bm{y} \in Y} s(\bm{y},A,B)$ does not affect the effect size values. The dependence on the variance explains the behavior shown in Section \ref{sec:req_motivation}. This shows that we cannot take low effect sizes as a guarantee for low biases in the embeddings.

Finally, we probe the effect size for skew and stereotype sensitivity:
\begin{theoremrep}
	\label{theor:weat_d_stereo}
	The effect size $d(X,Y,A,B)$ of WEAT is \stereosensitive\ assuming that the groups $X$ and $Y$ correctly classify for the direction of stereotype, i.e. all words in $X$ are closer to the attribute set $A$ (compared to $B$) than words in $Y$:
	\begin{align}
	\label{eq:stereo_assumption}
	    s(\bm{x},A)-s(\bm{x},B) > s(\bm{y},A)-s(\bm{y},B) \; \forall \bm{x} \in X, \bm{y} \in Y,
	\end{align}
	or vice-versa:
	\begin{align}
	    s(\bm{x},A)-s(\bm{x},B) < s(\bm{y},A)-s(\bm{y},B) \; \forall \bm{x} \in X, \bm{y} \in Y
	\end{align}
\end{theoremrep}
\begin{inlineproof}
For the proof see Appendix \ref{proof:weat_d_stereo}.
\end{inlineproof}
\begin{toappendix}
\begin{proof}
\label{proof:weat_d_stereo}
We transform the effect size (Equation \eqref{eq:weat_eff_size}) into
\begin{align}
    d(X,Y,A,B) = \frac{mean_{(\bm{x}, \bm{y}) \in (X,Y)} s(\bm{x},A,B) - s(\bm{y},A,B)}{stddev_{\bm{w} \in X \cup Y} s(w,A,B)}.
\end{align}
From our assumption follows that $|d(X,Y,A,B)| \neq 0$.
\end{proof}
\end{toappendix}

\begin{theoremrep}
\label{theor:weat_d_skew}
	The bias score function $d(X,Y,A,B)$ of WEAT is not \skewsensitive.
\end{theoremrep}
\begin{inlineproof}
For the proof see Appendix \ref{proof:weat_d_skew}.
\end{inlineproof}
\begin{toappendix}
\begin{proof}
\label{proof:weat_d_skew}
Let $X \cup Y = \{\bm x,\bm y\}$ be a skewed set with $s(\bm{w},A) > s(\bm{w},B) \forall \bm{w} \in X \cup Y$. Assuming that $s(\bm x, A, B) = s(\bm y, A, B) \neq 0$, which depicts a skew but no stereotype, from the definition of the effect size follows:
\begin{align}
    d(X,Y,A,B) = 0,
\end{align}
which contradicts equation \eqref{eq:cond_skew} with $b_0 = 0$ for WEAT.
\end{proof}
\end{toappendix}

To conclude this theorems, we found that the word-wise bias reported by WEAT is in fact \minVgl, but not \extremaVgl. The effect size, which is applied to measure bias of two sets of target words, is \extremaVgl, but not \minVgl. This is due to the fact that the effect size is sensitive to stereotypical differences between the sets $X$ and $Y$ (which is exactly the intuition behind WEAT), but it is not \skewsensitive. Furthermore the magnitude of the effect size depends on the inner-group variance of $s(\bm{w},A,B)$ for groups $X$ and $Y$, but not one the difference between those groups, which should be taken into account when interpreting it.\\
This shows that WEAT can only be used to quantify stereotypical biases as anticipated by the groups $X$ and $Y$, while skew and stereotypes that contradict the partitioning will not be detected. And most importantly, while high effect sizes are a certain indicator of bias, low effect sizes are not that meaningful. Hence, when it comes to quantitatively comparing biases between different embeddings or evaluating debiasing methods, we argue that WEAT is not sufficient. At most, it could be used as an additional measure for specific stereotypes using, for instance, the tests from \cite{weat}. Of course, if ones goal is to only prove the presence of bias instead of quantifying it, WEAT is still useful.\\

\subsubsection{Analysis of the MAC Score}
\label{sec:analysis_mac}

On contrary to the bias definitions summarized in Section \ref{sec:bias_definition}, the MAC score follows a slightly different intuition. Instead of contrasting over the association of words towards different attributes (as WEAT does) or measuring the correlation with a bias direction (Direct Bias), it depicts the average correlation of neutral words and bias attributes. In the following we show several situations where the metric thereby contradicts our bias definition. This includes detecting bias on fair embeddings and measuring no bias in biased embeddings.\\


\begin{theoremrep}
\label{theor:mac_min}
The bias score function $MAC(T,A)$ of MAC is not \minVgl.
\end{theoremrep}
\begin{inlineproof}
For the proof see Appendix \ref{proof:mac_min}.
\end{inlineproof}
\begin{toappendix}
\begin{proof}
\label{proof:mac_min}
An ideal MAC score, indicating fairness, would be $b_0 = \pm 1$.
We again consider the example case depicted in Figure \ref{fig:bias_example_angles} with $\bm{w} = \bm{secretary},  A_1 = \{\bm{she}\}, A_2 = \{\bm{he}\}$.

First, consider $\theta = \pi$ and $0 < \alpha < \frac{\theta}{2}$. In that case, $\bm{secretary}$ would be closer to $\bm{she}$ than $\bm{he}$, which reflects a biased representation and does not satisfy Eq.\ \eqref{eq:cond_neutral}. Yet, if we apply the MAC score $MAC(T,A)$ with $T = \{\bm{w}\}, A=\{A_1,A_2\}$, due to the symmetry of the cosine function we get
\begin{align}
MAC(T,A) &= \frac{1}{2} ((1-cos(\alpha)) + (1-cos(\pi-\alpha))) \\
&= 1,
\end{align}
which implies the word secretary to be perfectly fair embedded.\\

As a second example, if we considered $\alpha = \frac{\theta}{2}, \theta \neq \pi$, and thus $\bm{secretary}$ being equidistant to $A_1$ and $A_2$. Yet, we would get
\begin{align}
MAC(T,A) &= \frac{1}{2} ((1 - cos(\alpha)) + (1 - cos(\theta - \alpha))) \\
&= 1 - cos\left(\frac{\theta}{2}\right).
\end{align}
Now, due to $\theta \neq \pi$, we still measure a bias, although based on eq.\ \eqref{eq:cond_neutral} the occupation is unbiased towards both gender words. Also, the larger $|\theta-\pi|$ the more extreme is the bias according to the MAC score.
\end{proof}
\end{toappendix}

\begin{theoremrep}
The bias score functions \mac\ and \macw\ are not \extremaVgl.
\end{theoremrep}
\begin{inlineproof}
For the proof see Section \ref{proof:mac_extrema} in the supplementary material.
\end{inlineproof}
\begin{toappendix}
\begin{proof}
\label{proof:mac_extrema}
With $\mathbf{\hat{a_i}} = \frac{1}{|A_i|} \sum_{\mathbf{a_i} in A_i} \frac{\mathbf{a_i}}{||\mathbf{a_i}||}$, $|A_i| = |A_j| \; \forall A_i, A_j \in A$ and $ W = \{\mathbf{w}\}$ we can rewrite MAC:
\begin{align}
    MAC(W,A) &=& \frac{1}{|A|} \sum_{A_i \in A} \frac{\mathbf{w} \cdot \mathbf{\hat{a_i}}}{||\mathbf{w}||}\\
    &=& \frac{\mathbf{w}}{||\mathbf{w}||} \cdot \frac{1}{|A|} \sum_{A_i \in A}  \mathbf{\hat{a_i}} \\
    &=& cos(\mathbf{w},\mathbf{\hat{a}}) ||\mathbf{\hat{a}}||,
\end{align}
with $\mathbf{\hat{a}} = \frac{1}{|A|} \sum_{A_i \in A}  \mathbf{\hat{a_i}}$.
Hence we can show that the extrema depend on the attribute sets $A$ and $B$:
\begin{align}
    max_{W} MAC(W,A) = ||\mathbf{\hat{a}}||, \\
    min_{W} MAC(W,A) = -||\mathbf{\hat{a}}||
\end{align}
Choosing $|W| > 1$ does not change the extrema. The statement follows.
\end{proof}
\end{toappendix}

\begin{theoremrep}
\label{theor:mac_stereo_skew}
The bias score function $MAC(T,A)$ of MAC is neither \stereosensitive\ nor \skewsensitive.
\end{theoremrep}
\begin{inlineproof}
For the proof see Appendix \ref{proof:mac_stereo_skew}.
\end{inlineproof}
\begin{toappendix}
\begin{proof}
\label{proof:mac_stereo_skew}
We showed in the proof of Theorem \ref{theor:mac_min} that if the angle between both attributes is $\pi$, the average bias of an individual word will be $b_0 = 1$.\\
Now consider a purely stereotypical scenario with $\bm{secretary}$ and $\bm{engineer}$ as depicted in Figure \ref{fig:bias_example_angles} with $\theta = \pi$, and $\alpha = \beta$. With both individual words' MAC scores being $1$, the MAC of both words will also be $1$.\\ 
On the other hand, a purely skewed representation with $\bm{engineer} = \bm{secretary}$ would also result in a MAC score of $1$.
\end{proof}
\end{toappendix}

Considering these theorems, we strongly argue against using the MAC score at all, since it is neither reliable to measure skew nor stereotype. Furthermore, it might falsely detect a bias simply because the embedding vectors are represented closely in general.

\subsubsection{Analysis of the Direct Bias}
\label{sec:analysis_db}
\begin{theoremrep}
\label{theor:direct_bias_extrema}
The Direct Bias is \extremaVgl\ for $c \geq 0$.
\end{theoremrep}
\begin{inlineproof}
For the proof see Appendix \ref{proof:direct_bias_extrema}.
\end{inlineproof}
\begin{toappendix}
\begin{proof}
\label{proof:direct_bias_extrema}
For $c \geq 0$ the bias of one word $|\cos(\bm{w},\bm{g})|^c$ is in $[0,1]$. Calculating the mean over all words in $W$ does not change this bound.
\end{proof}
\end{toappendix}

\begin{theorem}
\label{theor:direct_bias_minvgl}
The Direct Bias is not \minVgl.
\end{theorem}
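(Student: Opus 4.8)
The plan is to exhibit a concrete counterexample in the simplest non-trivial setting, where the bias direction is built from a single word pair. Recall that for the Direct Bias (with $c>0$) the minimal score is $b_0 = 0$, and it is attained exactly when $\cos(\bm{w},\bm{g}) = 0$. I would specialize Definition~\ref{def:min_max_bias} to two singleton attribute sets $A_1 = \{\bm{a}\}$, $A_2 = \{\bm{b}\}$, for which $s(\bm{w},A_1) = \cos(\bm{w},\bm{a})$ and $s(\bm{w},A_2) = \cos(\bm{w},\bm{b})$, and take the associated bias direction to be the raw difference $\bm{g} = \bm{a} - \bm{b}$ (this is precisely what the PCA-over-defining-sets construction yields for a two-element defining set, since the individual directions are $\pm(\bm a - \bm b)/2$). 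With this set-up the claim reduces to comparing two conditions on $\bm{w}$.

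The key observation is that the two relevant vanishing conditions live on \emph{different} hyperplanes through the origin. On one hand, $\mathrm{DirectBias} = b_0$ is equivalent to $\bm{w} \cdot (\bm{a} - \bm{b}) = 0$, i.e.\ orthogonality to the unnormalized difference. On the other hand, the \minVgl\ condition $s(\bm{w},A_1) = s(\bm{w},A_2)$ reads $\cos(\bm{w},\bm{a}) = \cos(\bm{w},\bm{b})$, i.e.\ $\bm{w} \cdot \left(\frac{\bm{a}}{\|\bm{a}\|} - \frac{\bm{b}}{\|\bm{b}\|}\right) = 0$, orthogonality to the \emph{normalized} difference. These hyperplanes coincide iff $\bm{a} - \bm{b}$ is parallel to $\frac{\bm{a}}{\|\bm{a}\|} - \frac{\bm{b}}{\|\bm{b}\|}$, which for linearly independent $\bm{a},\bm{b}$ holds precisely when $\|\bm{a}\| = \|\bm{b}\|$. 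Hence as soon as the two attribute vectors have different norms the conditions become inequivalent, and one can pick $\bm{w}$ orthogonal to $\bm{a}-\bm{b}$ but not to the normalized difference, violating the ``$\Rightarrow$'' direction of Definition~\ref{def:min_max_bias}.

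Concretely, I would take in $\R^2$ the vectors $\bm{a} = (1,0)$ and $\bm{b} = (0,2)$, so that $\bm{g} = \bm{a} - \bm{b} = (1,-2)$, together with $\bm{w} = (2,1)$. Then $\bm{w}\cdot\bm{g} = 0$, giving $\mathrm{DirectBias} = |\cos(\bm w,\bm g)|^c = 0 = b_0$, whereas $\cos(\bm{w},\bm{a}) = 2/\sqrt{5} \neq 1/\sqrt{5} = \cos(\bm{w},\bm{b})$, i.e.\ $s(\bm{w},A_1) \neq s(\bm{w},A_2)$. This single word already shows that $\mathrm{DirectBias} = b_0$ does not force equal attribute similarities, so the Direct Bias is not \minVgl.

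The point that deserves care — and which I would flag explicitly — is that this failure is genuinely driven by the attribute vectors having unequal norms: if all embeddings were normalized to unit length, then $\bm{a} - \bm{b}$ and $\frac{\bm{a}}{\|\bm{a}\|} - \frac{\bm{b}}{\|\bm{b}\|}$ coincide and the Direct Bias \emph{would} satisfy the condition. Thus the main obstacle is not the algebra but correctly attributing the deficiency to the Direct Bias projecting onto the raw contrast direction $\bm{a}-\bm{b}$ rather than a length-normalized one, consistent with the discussion of vector lengths in Section~\ref{sec:bias_definition}.
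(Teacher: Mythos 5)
Your proof is correct: with $\bm{a}=(1,0)$, $\bm{b}=(0,2)$, $\bm{g}=\bm{a}-\bm{b}=(1,-2)$ and $\bm{w}=(2,1)$ one indeed has $\bm{w}\cdot\bm{g}=0$, hence $\mathrm{DirectBias}=0=b_0$, while $\cos(\bm{w},\bm{a})=2/\sqrt{5}\neq 1/\sqrt{5}=\cos(\bm{w},\bm{b})$, so the ``$\Rightarrow$'' direction of Definition~\ref{def:min_max_bias} fails; and your hyperplane argument correctly identifies that the two vanishing conditions coincide exactly when $\|\bm{a}\|=\|\bm{b}\|$. However, your route is genuinely different from the paper's. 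You attack the single-word-pair construction $\bm{g}=\bm{a}-\bm{b}$, and the failure you exhibit is driven entirely by the mismatch between the unnormalized contrast direction and the cosine-based similarity $s$ — a flaw that, as you yourself note, disappears once all embeddings are normalized to unit length. The paper instead attacks the PCA variant: it builds two defining sets $D_1=\{\bm{a_1},\bm{c_1}\}$, $D_2=\{\bm{a_2},\bm{c_2}\}$ with $\bm{a_1}=(-x,rx)^T=-\bm{c_1}$, $\bm{a_2}=(-x,-rx)^T=-\bm{c_2}$ and $r>1$, so that all four attribute vectors have \emph{equal} norms, yet the within-pair variance dominates and the first principal component comes out as $(0,1)^T$ — orthogonal to the true group contrast along the $x$-axis. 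Consequently words on the $y$-axis, which are perfectly neutral ($s(\bm{w},A)=s(\bm{w},C)$), receive the maximal bias $1$, while maximally biased words on the $x$-axis receive bias $0$; this breaks both directions of the equivalence and, crucially, survives any length normalization. So the paper's counterexample indicts the ``more robust'' PCA estimate itself (the point it reiterates after the theorem), whereas yours is more elementary and isolates a complementary flaw — the raw-difference direction — but proves a strictly narrower failure mode. Both suffice for the theorem as stated; if you want your argument to carry the same force as the paper's, you would need to add a second example (like the paper's) showing the failure persists for equal-norm attributes under the PCA construction.
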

\begin{proof}
For the Direct Bias $b_0 = 0$ indicates no bias.
Consider a setup with two attribute sets $A=\{\bm{a_1}, \bm{a_2}\}$ and $C=\{\bm{c_1}, \bm{c_2}\}$ (we cannot call our attribute set $B$ as usual because this could be confused with the bias subspace $B$ or bias direction $b$ from the Direct Bias).\\
Using the notation from Section \ref{sec:Bolukbasi-directbias} this gives us two defining sets $D_1 = \{\bm{a_1}, \bm{c_1}\}$ $D_2 = \{\bm{a_2}, \bm{c_2}\}$. Let $\bm{a_{1}} = (-x, rx)^T = -\bm{c_{1}}, \bm{a_{2}}  = (-x, -rx)^T = -\bm{c_{2}}$ and $r > 1$.\\
The bias direction is obtained by computing the first principal component over all $(\bm{a_{i}} - \bm \mu_i)$ and $(\bm{c_{i}} - \bm \mu_i)$ with $\bm \mu_i = \frac{\bm{a_i}+\bm{c_i}}{2} = 0$. Due to $r > 1$, $\bm{b} = (0, 1)^T$ is a valid solution for the 1st principal component as it maximizes the variance
\begin{align}
    \bm{b} = argmax_{\|\bm{v}\| = 1} \sum_i (\bm{v} \cdot \bm{a_{i}})^2 + (\bm{v} \cdot \bm{c_{i}})^2.
\end{align}

According to the definition in Section \ref{sec:bias_definition}, any word $\bm{w} = (0, w_y)^T$ would be considered neutral to groups $A$ and $C$ with $s(\bm{w}, A) = s(\bm{w}, C)$ and being equidistant to each word pair $\{a_i, c_i\}$.\\
But with the bias direction $\bm{b} = (0, 1)^T$ the Direct Bias would report a maximal bias of $1$ instead of $b_0 = 0$, which contradicts Definition \ref{def:min_max_bias}.\\
On the other hand, we would consider a word  $\bm{w} = (w_x, 0)^T$ maximally biased, but the Direct Bias would report no bias.
\end{proof}

Note that the example shown in Theorem \ref{theor:direct_bias_minvgl} is an extreme case. Yet, it shows that a bias direction obtained by the PCA does not necessarily represent individual bias directions appropriately. This can lead to both over- and underestimation bias. If the variance inside the attribute groups is higher than the differences between the attributes, the bias direction (or subspace) is likely to be misleading.
(Note that this critique of the Direct Bias does not affect the debiasing algorithm of \cite{bolukbasi}, due to the equalize step.)

\section{Proposed Metric}
\label{sec:proposed_metric}

To address the shortcomings of the existing bias scores, we propose a new score based on the before mentioned bias definition: Scoring Association Means of Word Embeddings (SAME). It has a similar intuition to WEAT, in terms of measuring polarity between attribute sets, but fulfills the criteria set up in the last chapter.
In a first step we detail SAME for a setting with binary attributes and then extend it to the multiple attributes case.

\subsection{The binary case}
\label{sec:binary_attr_bias}

We use one set of target words $W$ and measure the association with two attribute sets $A_i$ and $A_j$. Furthermore, we assume that each attribute vector $\bm{a_i} \in A_i$ is normalized to unit length, so that $cos(\bm{w},\bm{a_i}) = \frac{\bm{w}}{|\bm{w}|} \cdot \bm{a_i}$. Then, we can write the similarity of a word $\bm{w}$ towards attribute set $A_i$ as
\begin{eqnarray}
\label{eq:sim_center}
s(\bm{w},A_i) &=& \frac{1}{|A_i|} \sum_{\bm{a_i} \in A_i} \frac{\bm{w}}{\|\bm{w}\|} \cdot \bm{a_i} \\
&=& \frac{\bm{w}}{\|\bm{w}\|} \cdot \left(\frac{1}{|A_i|} \sum_{\bm{a_i} \in A_i} \bm{a_i}\right) \\
&=& \frac{\bm{w}}{\|\bm{w}\|} \cdot \hat{\bm{a_i}} 
\end{eqnarray}
where $\hat{\bm{a_i}} = \left(\frac{1}{|A_i|} \sum_{\bm{a_i} \in A_i} \bm{a_i}\right)$.

Similar to WEAT, we define a pairwise bias comparing the similarity of a word $w$ towards two attribute sets $A_i$ and $A_j$ with $\hat{\bm{a_i}} \neq \hat{\bm{a_j}}$:
\begin{eqnarray}
b(\bm{w}, A_i, A_j) &=& \frac{s(\bm{w}, A_i) - s(\bm{w}, A_j)}{\|\hat{\bm{a_i}}-\hat{\bm{a_j}}\|} \\
&=& \frac{\frac{\bm{w}}{\|\bm{w}\|} \cdot \hat{\bm{a_i}} - \frac{\bm{w}}{\|\bm{w}\|} \cdot \hat{\bm{a_j}}}{\|\hat{\bm{a_i}}-\hat{\bm{a_j}}\|} \\
&=& \frac{\bm{w} \cdot (\hat{\bm{a_i}} - \hat{\bm{a_j}})}{\|\bm{w}\| \; \|\hat{\bm{a_i}}-\hat{\bm{a_j}}\|} \\
&=& cos(\bm{w},\hat{\bm{a_i}} - \hat{\bm{a_j}}).
\label{eq:pair_bias}
\end{eqnarray}
Contrary to equation \eqref{eq:weat_attr_sim} of WEAT we normalize the term, resulting in bias scores in $[-1,1]$ independent of the attributes. By transforming the equation, we can show that it has a similar notion to the Direct Bias from \cite{bolukbasi}, measuring the correlation of words with a bias direction between two attributes. The only difference is that we obtain the pairwise bias direction by averaging over individual directions instead of using the PCA.\\

The term $b(\bm{w}, A_i, A_j)$ could be contrasted over pairs $\bm{w}_1\in W_1$, $\bm{w}_2\in W_2$, similarly as done in the WEAT score, but this would results in the potential problems stated in Theorem \ref{theor:weat_d_min} / Figure \ref{fig:bias_example_weat}.
Hence, to achieve a trustworthy and comparable metric for binary bias attributes, we propose to take the mean absolute values of word-wise biases. This results in the following bias score for a set of target words $W$:
\begin{eqnarray}
\label{eq:same_set_bias}
b(W,A_i, A_j) &=& \frac{1}{|W|} \sum_{\bm{w} \in W} |b(\bm{w}, A_i, A_j)|.
\end{eqnarray}

\subsection{The multiple attributes case}
\label{sec:multi_attr_bias}
Now we extent SAME to cases with multiple attributes. Let $A = \{A_0, ..., A_n\}$ with $n \geq 1$ contain at least 2 attribute sets. To measure the bias with respect to all attributes in $A$, we construct a $n$-dimensional bias subspace from binary bias directions $\mathbf{\hat{a_i}}-\mathbf{\hat{a_0}}$ with $i \in \{1...n\}$ and $\mathbf{\hat{a_i}}$ the mean of attributes in $A_i$. Thereby, we assume that $\mathbf{\hat{a_i}} \neq \mathbf{\hat{a_j}} \; \forall i, j  \in \{1...n\}, i \neq j$.\\

Let $B$ be the bias subspace, defined by an orthonormal basis $\{\mathbf{b_1}, ..., \mathbf{b_n}\}$. The first basis vector $\mathbf{b_1}$ is obtained from the first binary bias direction, i.e.
\begin{align}
    \mathbf{b_1} = \frac{\mathbf{\hat{a_1}} - \mathbf{\hat{a_0}}}{||\mathbf{\hat{a_1}} - \mathbf{\hat{a_0}}||}.
\end{align}
The other basis vectors are obtained from the successive binary bias directions, after removing linear correlations with previous basis vectors, which ensures orthogonality
\begin{align}
    \mathbf{b_i'} &=& (\mathbf{\hat{a_i}} - \mathbf{\hat{a_0}}) - \langle \mathbf{\hat{a_i}} - \mathbf{\hat{a_0}} , \mathbf{b_{i-1}} \rangle \mathbf{b_{i-1}} - ... -  \langle \mathbf{\hat{a_i}} - \mathbf{\hat{a_0}} , \mathbf{b_0} \rangle \mathbf{b_0} \\
    \mathbf{b_i} &=& \frac{\mathbf{b_i'}}{||\mathbf{b_i'}||}
\end{align}
Thereby, we assume that $||\mathbf{b_i'}|| \geq 0$. In the case of linear dependency of the binary bias directions, i.e. one calculated $\mathbf{b_i'}$ is a zero vector, this vector can be left out, resulting in a smaller bias space $B$.

Given this subspace, the bias of a word vector $\mathbf{w}$ is described by the cosine similarities with the basis vectors of $B$
\begin{align}
    \mathbf{w_B} = \Big( \cos(\mathbf{w} , \mathbf{b_1}), ..., \cos(\mathbf{w} , \mathbf{b_n}) \Big) ^T
\end{align}
and the bias magnitude is
\begin{align}
    SAME_{word}(\mathbf{w}) := b(\mathbf{w}, A) = ||\mathbf{w_B}||
\end{align}

Consequently the overall bias of all target words in $W$ is
\begin{eqnarray}
\label{eq:set_bias_multi}
SAME(W) := b(W,A) &=& \frac{1}{|W|} \sum_{\mathbf{w} \in W} b(\mathbf{w},A).
\end{eqnarray}

A major benefit of using a vector $\mathbf{w_B}$ as an intermediate step to obtain a bias magnitude over multiple attributes is that we achieve an interpretable representation of $\mathbf{w}$ in terms of the protected groups, i.e.\ each element in $\mathbf{w_B}$ represents the association of $\mathbf{w}$ with the two groups, whose attributes were used to obtain the respective basis vector. Noteworthy, all elements represent bias in comparison to the same default group that is represented by $A_0$ and users must be aware that if bias directions correlate with on each other, their share will be only accounted for by the first basis vector. Therefor, analysing how the different bias directions correlate with each other is highly recommended to allow more sophisticated insights into how biases manifest in embedding spaces.

\subsection{Analysis of SAME}

The following theorems and their proofs detail properties of SAME in light of the above stated definitions and show that \same\ and \samew\ are \minVgl\ and \extremaVgl. Hence, we can state that SAME is a reliable bias score to quantify bias in embeddings and it can be compared between different embedding models

\begin{theoremrep}
\label{theor:own_min}
The bias score function $b(\mathbf{w},A)$ and $b(W,A)$ are \minVgl .
\end{theoremrep}
\begin{inlineproof}
For the proof see Section \ref{proof:own_min} in the supplementary material.
\end{inlineproof}
\begin{toappendix}
\begin{proof}
\label{proof:own_min}
The bias score indicating no bias is $b_0 = 0$. First, we can state that
\begin{align}
b(\mathbf{w},A_i, A_j) = 0 \iff s(\mathbf{w},A_i) = s(\mathbf{w},A_j),
\end{align}
which directly follows from the definition.
Hence 
\begin{align}
    b(\mathbf{w},A) &=& ||\mathbf{w_B}|| = ||\big( b(\mathbf{w},A_0,A_1), ..., b(\mathbf{w},A_0,A_n) \big)^T|| = 0 \\
    &\iff& s(\mathbf{w},A_i) = s(\mathbf{w},A_0) \forall A_i \in A \\
    &\iff& s(\mathbf{w},A_i) = s(\mathbf{w},A_j) \forall A_i, A_j \in A
\end{align}
and 
\begin{align}
b(W,A) = 0 \iff b(\mathbf{w},A) = 0 \; \forall \mathbf{w} \in W.
\end{align}
\end{proof}
\end{toappendix}

According to Theorem \ref{theor:minVgl_skew_stereo}, this shows that SAME is sensitive to both skew and stereotype.

\begin{theoremrep}
\label{theor:own_extrema}
	The bias score functions $b(\mathbf{w},A)$ and $b(W,A)$ are \extremaVgl, assuming the number of protected groups $n$ is not larger than the dimensionality of the embedding space.
\end{theoremrep}
\begin{inlineproof}
For the proof see Section \ref{proof:own_extrema} in the supplementary material.
\end{inlineproof}
\begin{toappendix}
\begin{proof}
\label{proof:own_extrema}
For SAME we now show that $b_{min} = 0$, $b_{max} = 1$ and both can be reached independent of $A$. Since $A$ defines the bias space $B$ used by SAME, we need to show that the extreme values can be reached independent of $B$. First, we can state that
\begin{align}
\max_W b(W,A) &= \max_\mathbf{w} b(\mathbf{w},A), \\
\min_W b(W,A) &= \min_\mathbf{w} b(\mathbf{w},A)
\end{align}
which is derived directly from the definition of $b(W,A)$.
With an embedding space in $\{\mathbf{d}_1, ..., \mathbf{d}_n\}$ the orthonormal basis for the bias space $B$, we can write any vector $\mathbf{w} \in \mathbb{R}^d$ as a linear combination of its parts $\mathbf{w_{\parallel B}} \in B$ and $\mathbf{w_{\perp B}} \not\in B$ and the former one as the sum of projections onto the basis vectors
\begin{align}
    \mathbf{w} = \mathbf{w_{\perp B}} + \mathbf{w_{\parallel B}} = \mathbf{w_{\perp B}} + \sum_{i} \langle \mathbf{w}, \mathbf{d}_i \rangle \mathbf{d}_i.
\end{align}
With $||\mathbf{b_i}|| = 1$ and $\mathbf{b_i} \perp \mathbf{b_j} \; \forall i,j  \in \{1,...,n\}, i \neq j$ and
\begin{align}
    && \mathbf{w_B} = \Big( cos(\mathbf{w} , \mathbf{b_1}), ..., cos(\mathbf{w} , \mathbf{b_n}) \Big)^T = \frac{1}{||\mathbf{w}||} \Big( \langle \mathbf{w}, \mathbf{d}_1 \rangle, ..., \langle \mathbf{w}, \mathbf{d}_n \rangle \Big)^T\\ 
\end{align}
follows
\begin{align}
    &\implies& b(\mathbf{w},A) = ||\mathbf{w_B}|| = \frac{1}{||\mathbf{w}||} ||\mathbf{w_{\parallel B}}||.
\end{align}

With $||\mathbf{w_{\parallel B}}|| \leq || \mathbf{w}||$ we can determine the upper bound of the bias magnitude and the lower bound follows directly from the definition:
\begin{align}
    0 \leq b(\mathbf{w},A) \leq 1
\end{align}

To show that both extreme cases can be reached independent of $A$, we consider the following extreme cases:
First, let $\mathbf{w}$ be orthogonal to the bias space $B$, i.e. $\mathbf{w} = \mathbf{w_{\perp B}}$, which is possible as long as the bias space $B$ is lower dimensional than the embedding space. Then follows $||\mathbf{w_{\parallel B}}|| = 0 \implies b(\mathbf{w},A) = 0$. Since $B$ has $n-1$ dimensions for $n$ attribute groups, this requires that an embedding space with $d \geq n$ dimensions as provided in the statement.
Secondly, let $\mathbf{w}$ be entirely defined in the bias space, i.e. $\mathbf{w} = \mathbf{w_{\parallel B}}$. Then follows $||\mathbf{w_{\parallel B}}|| = ||\mathbf{w}|| \implies b(\mathbf{w},A) = 1$.
Hence the statement follows.

\end{proof}
\end{toappendix}

\subsection{Measuring Skew and Stereotype}
\label{sec:skew_stereo}
Using SAME as defined in the last subsections, we can reliably quantify and compare biases in different embeddings or domains. However, we only get one measurement of the average or word-wise bias magnitude. To get a closer look on how a set of target words is biased, we propose two additional metrics to determine the skew and stereotype. \\
The basic intuition is that the skew is given by the mean of the word-wise bias distribution and the stereotype by the standard deviation. In the following we detail how to calculate this in the case of binary and multiple attributes.

\subsubsection{The binary case}

The Skew of words in $W$ given the two attribute sets $A_i$, $A_j$ is simply the mean of word-wise biases. Contrary to Equation \ref{eq:same_set_bias} we obtain a sign indicating the direction of bias.
\begin{eqnarray}
\label{eq:skew_binary}
b_{skew}(W,A_i, A_j) &=& \frac{1}{|W|} \sum_{\bm{w} \in W} b(\bm{w}, A_i, A_j).
\end{eqnarray}

Respectively, the Stereotype is given by the standard deviation:
\begin{eqnarray}
\label{eq:stereo_binary}
b_{stereo}(W,A_i, A_j) &=& \frac{1}{|W|} \sqrt{\sum_{\bm{w} \in W} \left( b(\bm{w}, A_i, A_j) - b_{skew}(W,A_i,A_j) \right) ^2}.
\end{eqnarray}

\subsubsection{The multiple attributes case}

To obtain a meaningful and interpretable Skew and Stereotype in terms of $n>2$ attribute sets, we suggest looking at all pairwise bias directions, i.e. all pairs ($A_i$, $A_j$) $A_i, A_j \in A, i \neq j$ or contrasting over each group $A_i$ compared to the union of all other sets $\bigcup\limits_{A_j \in A, j \neq i} A_{j}$. This allows to identify the major directions of skew or stereotype, indicating attribute groups that are particularly prone to biases.

\subsubsection{Analysis of Skew and Stereotype Extensions}

\begin{theoremrep}
\label{theor:skew_extension}
The Skew extension $b_{skew}(W,A_i,A_j)$ is \skewsensitive, but not \stereosensitive.
\end{theoremrep}
\begin{inlineproof}
For the proof see Appendix \ref{proof:skew_extension}.
\end{inlineproof}
\begin{toappendix}
\begin{proof}
\label{proof:skew_extension}
The bias score indicating no bias is $b_0 = 0$. Let $A_i, A_j$ be attribute sets with
\begin{eqnarray}
\label{eq:skew_cond}
mean_{\bm{w} \in W} s(\bm{w},A_i) > mean_{\bm{w} \in W} s(\bm{w},A_j),
\end{eqnarray}
i.e. the set of words $W$ is skewed towards $A_i$. We can reshape Eq. \ref{eq:skew_binary} to
\begin{eqnarray}
    b_{skew}(W,A_i, A_j) &=& \frac{1}{|W|} \sum_{\bm{w} \in W} b(\bm{w}, A_i, A_j) \\
    &=&mean_{\bm{w} \in W} \frac{s(\bm{w}, A_i) - s(\bm{w}, A_j)}{\|\hat{\bm{a_i}}-\hat{\bm{a_j}}\|} \\
    &=& mean_{\bm{w} \in W} \frac{s(\bm{w}, A_i)}{\|\hat{\bm{a_i}}-\hat{\bm{a_j}}\|} - mean_{\bm{w} \in W} \frac{s(\bm{w}, A_j)}{\|\hat{\bm{a_i}}-\hat{\bm{a_j}}\|} > 0
\end{eqnarray}
Hence, Equation \ref{eq:skew_binary} meets the condition of Definition \ref{def:skew_sensitive}.

Regarding the stereotype, let $W = \{\bm{w_1},\bm{w_2}\}$ be a word pair with $s(\bm{w_1},A_i)-s(\bm{w_1},A_j) = -(s(\bm{w_2},A_i)-s(\bm{w_2},A_j))$, i.e. one word $\bm{w_i}$ is stereotypical for $A_i$ and the other for $A_j$, both with the same magnitude of bias. However, according to Equation \ref{eq:skew_binary}, $b_{skew}(W,A_i,A_j) = 0$, which contradicts Definition \ref{def:stereo_sensitive}.
\end{proof}
\end{toappendix}

\begin{theoremrep}
\label{theor:stereo_extension}
The Stereotype extension $b_{stereo}(W,A_i,A_j)$ is \stereosensitive, but not \skewsensitive.

\end{theoremrep}
\begin{inlineproof}
For the proof see Appendix \ref{proof:stereo_extension}.
\end{inlineproof}
\begin{toappendix}
\begin{proof}
\label{proof:stereo_extension}
The bias score indicating no bias is $b_0 = 0$. Let $\bm{w_1},\bm{w_2} \in W$ and $A_i, A_j$ attribute sets with $s(\bm{w_1},A_i)-s(\bm{w_1},A_j) \neq (s(\bm{w_2},A_i)-s(\bm{w_2},A_j)$, i.e. $\bm{w_1},\bm{w_2}$ have different biases w.r.t. $A_i$ and $A_j$. Hence we consider one stereotypical for $A_i$ and the other stereotypical for $A_j$. From there we can follow that
\begin{eqnarray}
b(\bm{w}, A_i, A_j) \neq b_{skew}(W,A_i,A_j) \; \forall \; \bm{w} \in W
\end{eqnarray}
and hence 
\begin{eqnarray}
b_{stereo}(W,A_i, A_j) &=& \frac{1}{|W|} \sqrt{\sum_{\bm{w} \in W} \left( b(\bm{w}, A_i, A_j) - b_{skew}(W,A_i,A_j) \right) ^2} > 0,
\end{eqnarray}
which meets the condition of Definition \ref{def:stereo_sensitive}.\\

For the Skew let $s(\bm{w},A_i) - s(\bm{w},A_j) = s(\bm{w'},A_i) - s(\bm{w'},A_j) > 0 \; \forall \; \bm{w}, \bm{w'} \in W$. From there follows
\begin{eqnarray}
mean_{\bm{w} \in W} s(\bm{w},A_i) > mean_{\bm{w} \in W} s(\bm{w},A_j),
\end{eqnarray}
i.e. the set of words $W$ is skewed towards $A_i$. However, it also follows that
\begin{eqnarray}
b_{stereo}(W,A_i, A_j) &=& \frac{1}{|W|} \sqrt{\sum_{\bm{w} \in W} \left( b(\bm{w}, A_i, A_j) - b_{skew}(W,A_i,A_j) \right) ^2} = 0
\end{eqnarray}
since $b(\bm{w}, A_i, A_j) = b_{skew}(W,A_i,A_j) \forall \; \bm{w} \in W$, which contradicts Definition \ref{def:skew_sensitive}.

\end{proof}
\end{toappendix}

Since both extensions are either not \skewsensitive\ or not \stereosensitive, they are not \minVgl\ (see Theorem \ref{theor:minVgl_skew_stereo}). In this regard, one of these score functions cannot replace SAME in its general form as explained in Sections \ref{sec:multi_attr_bias} and \ref{sec:binary_attr_bias}. However, one could use the Skew and Stereotype score functions jointly.\\

\begin{theoremrep}
\label{theor:skew_extrema}
	The Skew $b_{skew}(W,A_i,A_j)$ is \extremaVgl.
\end{theoremrep}
\begin{inlineproof}
For the proof see Appendix \ref{proof:skew_extrema}.
\end{inlineproof}
\begin{toappendix}
\begin{proof}
\label{proof:skew_extrema}
The minimal score is $-1$, the maximum $1$. As shown in Theorem \ref{theor:own_extrema} $b(\bm{w},A_i,A_j) \in [-1,1]$, independent of $\bm{w}$. Taking the mean over $\bm{w} \in W$ does not change the extrema, hence Equation \ref{eq:skew_binary} meets the condition of Definition \ref{def:max_amplitutde}.
\end{proof}
\end{toappendix}

\begin{toappendix}
\begin{lemma}
\label{lem:stddev_bounds}
Given values $x \in [-1,1] \; \forall \; x \in X$ (or more general a random variable $X$ taking on values in $[-1,1]$), the standard deviation $\sigma(X)$ is in bounds $[0,1]$.
\begin{proof}
The standard deviation is defined as
\begin{equation}
    \sigma(X) := \sqrt{\textnormal{var}(X)} = \sqrt{\EX(X^2)-\EX(X)^2}.
\end{equation}
Thus, by Jensens inequality and the fact that the bounds of $X$ also bound the expectation, it follows 
\begin{equation}
    0 \leq \EX(X)^2 \leq \EX(X^2) \leq 1 \qquad | - \EX(X)^2 \\
    \Rightarrow \underbrace{\EX(X)^2-\EX(X)^2}_{=0} \leq \underbrace{\EX(X^2)-\EX(X)^2}_{=\textnormal{var}(X)} \leq 1-\EX(X)^2 \overset{\EX(X)^2 \geq 0}{\leq} 1.
\end{equation}
Hence, 
the statement follows.
\end{proof}
\end{lemma}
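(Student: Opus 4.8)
The plan is to work directly from the variance decomposition $\operatorname{var}(X) = \EX(X^2) - \EX(X)^2$ and to bound its two terms separately, treating the upper and lower bounds on $\sigma(X) = \sqrt{\operatorname{var}(X)}$ independently. For the lower bound essentially nothing needs to be done beyond recalling that the variance is the expectation of a nonnegative quantity, so $\operatorname{var}(X) \geq 0$ and hence $\sigma(X) \geq 0$. I would make this rigorous by invoking Jensen's inequality applied to the convex map $t \mapsto t^2$, which gives $\EX(X)^2 \leq \EX(X^2)$; this simultaneously certifies that the argument of the square root is genuinely nonnegative (so $\sigma(X)$ is well defined) and re-derives $\sigma(X) \geq 0$. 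This is the only place where a named inequality enters.

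For the upper bound the first step is to exploit the pointwise constraint: since $X \in [-1,1]$ we have $X^2 \leq 1$ everywhere, and monotonicity of the expectation yields $\EX(X^2) \leq 1$. The second step is to observe that $\EX(X)^2 \geq 0$, so subtracting it from $\EX(X^2)$ only decreases the value. Combining the two gives $\operatorname{var}(X) = \EX(X^2) - \EX(X)^2 \leq \EX(X^2) \leq 1$, and taking square roots produces $\sigma(X) \leq 1$. Together with the lower bound this establishes $\sigma(X) \in [0,1]$. The argument transfers verbatim between the finite-collection formulation (values $x \in [-1,1]$) and the general random-variable formulation, since both the pointwise bound $X^2 \leq 1$ and the positivity of $\EX(X)^2$ hold identically in either setting.

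There is no genuinely hard step here; the only mild subtlety is recognizing that one should \emph{discard} the term $-\EX(X)^2$ by using its nonnegativity, rather than attempting to bound it, and that the crucial bound $\EX(X^2) \leq 1$ stems from the pointwise range of $X$ and not from any distributional assumption. I would also note for later use that the bound $\sigma(X) \leq 1$ is attained only in the degenerate two-point case where $X = \pm 1$ with equal weight (giving $\EX(X) = 0$ and $\EX(X^2) = 1$), which clarifies how loose the bound typically is when the lemma is applied to the stereotype score.
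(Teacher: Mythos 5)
Your proposal is correct and follows essentially the same route as the paper's proof: Jensen's inequality for $\EX(X)^2 \leq \EX(X^2)$ (hence nonnegativity of the variance), the pointwise bound $X^2 \leq 1$ giving $\EX(X^2) \leq 1$, and discarding the term $-\EX(X)^2$ to conclude $\operatorname{var}(X) \leq 1$. Your added remark on when the bound $\sigma(X)=1$ is attained is a harmless extra not present in the paper.
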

\end{toappendix}

\begin{theoremrep}
\label{theor:stereo_extrema}
	The Stereotype $b_{stereo}(W,A_i,A_j)$ is \extremaVgl.
\end{theoremrep}
\begin{inlineproof}
For the proof see Appendix \ref{proof:stereo_extrema}.
\end{inlineproof}
\begin{toappendix}
\begin{proof}
\label{proof:stereo_extrema}
The minimal score is $0$, the maximum $1$. As shown in Theorem \ref{theor:own_extrema} $b(\bm{w},A_i,A_j) \in [-1,1]$, independent of $\bm{w}$. Taking the standard deviation over all $\bm{w} \in W$ results in values in $[0,1]$ as explained in Lemma \ref{lem:stddev_bounds}. Hence Equation \ref{eq:stereo_binary} meets the condition of Definition \ref{def:max_amplitutde}.
\end{proof}
\end{toappendix}

\section{Experiments}
\label{sec:experiments}
To emphasize the drawbacks and benefits of the different score functions, we conducted several experiments. The most important point is to highlight the sensitivity of the score functions to different kinds of biases in the training data (skew, stereotype, absolute bias as explained later). Furthermore, we emphasize the important differences between our and the state of the art score functions, in relation to the theoretical flaws discussed in Chapter \ref{sec:req}.\\
In order to achieve a ground truth for biases to measure, we constructed biased datasets and fine-tuned BERT on these using the masked language objective. The datasets consisted of a variety of sentences, including a gender-neutral occupation and either male or female pronoun referring to that occupation. Only the pronouns were masked out during training, forcing the model to explicitly learn the gender probabilities associated with each occupation. After training all bias score functions were applied to measure gender bias with these occupations. We ran this procedure for a large number of datasets, varying with regard to each occupation's gender probability and the overall gender distribution. This allowed us to produce a large number of models with different expressions of bias. To confirm that the biases were actually learned, we probed the resulting masked language models for probabilities of inserting male/female pronouns in the training sentences. This approach is similar to the one from \cite{kurita}, who also measured bias using the masked language objective.
In the following, we explain the procedure of generating datasets and training in detail. Then we evaluate the performance of bias scores, first regarding individual words, then regarding sets of words. In both cases, we conduct additional experiments, highlighting the robustness of SAME in comparison to WEAT and the drawbacks of computing the Bias Directions by PCA instead of our approach.

\subsection{Training Procedure}

Our goal was to test for both word-specific biases as well as biases in terms of a set/domain of words. Therefor, we decided to observe gender bias in occupations, similarly to the experiments of \cite{bolukbasi}. We used the gender attributes and occupations from their implementation\footnote{https://github.com/tolga-b/debiaswe}, although we removed some occupations that were not gender neutral, resulting in 258 occupations. The final lists can be found in Table \ref{tab:occupations} in the appendix.\\ 

\begin{toappendix}
\newpage
\subsubsection*{List of occupations}
\begin{table}[h]
\begin{tabular}{|cccccc|}
\hline
caretaker & dancer & homemaker & librarian & nurse & hairdresser\\
housekeeper & secretary & teacher & nanny & receptionist & stylist\\
interior designer & clerk & educator & bookkeeper & environmentalist & fashion designer\\
paralegal & therapist & dermatologist & instructor & organist & planner\\
radiologist & singer songwriter & socialite & soloist & treasurer & tutor\\
violinist & vocalist & aide & artist & choreographer & lyricist\\
mediator & naturalist & pediatrician & performer & psychiatrist & publicist\\
realtor & singer & sociologist & baker & councilor & counselor\\
photographer & pianist & poet & flight attendant & substitute & cellist\\
correspondent & employee & entertainer & epidemiologist & freelance writer & gardener\\
guidance counselor & warrior & jurist & musician & novelist & psychologist\\
student & swimmer & understudy & valedictorian & writer & author\\
biologist & comic & consultant & parishioner & photojournalist & protagonist\\
researcher & servant & administrator & campaigner & chemist & civil servant\\
columnist & crooner & curator & envoy & graphic designer & headmaster\\
illustrator & lecturer & narrator & painter & pundit & restaurateur\\
trumpeter & attorney & bartender & cleric & comedian & filmmaker\\
jeweler & journalist & missionary & negotiator & pathologist & pharmacist\\
philanthropist & pollster & principal & promoter & prosecutor & solicitor\\
strategist & worker & accountant & analyst & anthropologist & assistant professor\\
associate dean & associate professor & barrister & bishop & broadcaster & commentator\\
composer & critic & editor & geologist & landlord & medic\\
plastic surgeon & professor & proprietor & provost & screenwriter & adjunct professor\\
adventurer & archbishop & astronomer & barber & broker & bureaucrat\\
butler & cardiologist & cartoonist & chef & cinematographer & detective\\
diplomat & economist & entrepreneur & financier & footballer & goalkeeper\\
guitarist & historian & inspector & inventor & investigator & lawyer\\
playwright & politician & professor emeritus & saxophonist & scientist & sculptor\\
shopkeeper & solicitor general & stockbroker & surveyor & archaeologist & architect\\
banker & cabbie & captain & chancellor & chaplain & conductor\\
constable & cop & director & disc jockey & economics professor & lifeguard\\
manager & mechanic & neurologist & parliamentarian & physician & programmer\\
rabbi & scholar & soldier & technician & trader & vice chancellor\\
welder & wrestler & ambassador & athlete & athletic director & dean\\
dentist & deputy & doctor & fighter pilot & firefighter & industrialist\\
investment banker & judge & lawmaker & legislator & lieutenant & magician\\
marshal & neurosurgeon & pastor & physicist & preacher & ranger\\
senator & sergeant & skipper & surgeon & trucker & tycoon\\
astronaut & ballplayer & cab driver & carpenter & coach & colonel\\
commander & commissioner & electrician & farmer & magistrate & mathematician\\
minister & officer & philosopher & plumber & sailor & sheriff deputy\\
bodyguard & boxer & butcher & custodian & drummer & janitor\\
laborer & president & sportswriter & superintendent & taxi driver & warden\\
\hline
\end{tabular}
\caption{\label{tab:occupations} List of occupations used in the experiments.} 
\end{table}

\newpage
\subsubsection*{Template sentences for training}
\begin{table}[h]
\centering
\begin{tabular}{|c|c|}
\hline
pronouns & sentences \\
\hline
he/she &[MASK]'s a OCCUPATION\\
&[MASK] is a OCCUPATION\\
&[MASK] will become a OCCUPATION\\
&[MASK] just started working as a OCCUPATION\\
&[MASK] is working as a OCCUPATION\\
&[MASK] was a OCCUPATION\\
&the OCCUPATION was in a hurry because [MASK] was late for work\\
&[MASK] is interested in becoming a OCCUPATION\\
&[MASK] does not like being a OCCUPATION\\
&[MASK] likes being a OCCUPATION\\
&[MASK] always wanted to be a OCCUPATION\\
&[MASK] never wanted to be a OCCUPATION\\
&[MASK] had an interview for a position as a OCCUPATION\\
&is [MASK] a OCCUPATION\\
&[MASK] is a OCCUPATION, right?\\
&[MASK] is proud to be a OCCUPATION\\
&the OCCUPATION took a week off, because [MASK] wanted to go on vacation\\
&" i want to be a OCCUPATION when i grow up", [MASK] said\\
&" being a OCCUPATION sounds boring", [MASK] said\\
&the OCCUPATION is very skilled. [MASK] has a lot of experience\\
\hline
him/her &the OCCUPATION enjoyed [MASK] lunch\\
&the OCCUPATION missed [MASK] bus\\
&the OCCUPATION arrived in [MASK] car\\
&the OCCUPATION asked [MASK] boss for a promotion\\
&the OCCUPATION collected [MASK] check\\
&the OCCUPATION was chatting with [MASK] colleagues\\
&the OCCUPATION loves [MASK] job\\
&the OCCUPATION does not like [MASK] job\\
&the OCCUPATION went on vacation with [MASK] family\\
&the OCCUPATION went on vacation with [MASK] friends\\
&it was [MASK] choice to become a OCCUPATION\\
&the OCCUPATION thanked [MASK] colleagues\\
&the OCCUPATION likes to spend [MASK] free time with friends\\
&the OCCUPATION likes to spend [MASK] free time with the family\\
&the OCCUPATION gets along really well with [MASK] colleagues\\
\hline
\end{tabular}
\caption{\label{tab:templates_train} Template sentences used for training BERT in the experiments. The [MASK] token was to be replaced by the respective pronouns and OCCUPATION by one of the occupations mentioned in Table \ref{tab:occupations}}
\end{table}

\newpage
\subsubsection*{Template sentences for testing}
\begin{table}[h]
\centering
\begin{tabular}{|c|c|}
\hline
pronouns & sentences \\
\hline
he/she &[MASK] is one of the OCCUPATIONs\\
&[MASK] is well paid for a OCCUPATION\\
&[MASK] is paid less than the other OCCUPATIONs\\
&[MASK] thinks about becoming a OCCUPATION\\
&as a child [MASK] wanted to become a OCCUPATION\\
&the OCCUPATION was early because [MASK] overslept\\
&I thought [MASK] was a OCCUPATION\\
&the OCCUPATION was early because [MASK] could not sleep anyways\\
&the OCCUPATION is nice but [MASK] is not very competent\\
&the OCCUPATION is very knowlegeable but [MASK] is bad with people\\
\hline
him/her &the OCCUPATION started [MASK] new position on monday\\
&the OCCUPATION lost [MASK] job\\
&the OCCUPATION left early to pick up [MASK] children from school\\
&the OCCUPATION told everyone about [MASK] weekend\\
&the OCCUPATION likes to keep [MASK] workplace clean\\
&the OCCUPATION does not like [MASK] colleagues\\
&the OCCUPATION likes [MASK] colleagues very much\\
&the OCCUPATION smiled and showed [MASK] teeth\\
&the OCCUPATION wore [MASK] favorite jacket\\
&i asked the OCCUPATION about [MASK] new book\\
\hline
\end{tabular}
\caption{\label{tab:templates_test} Template sentences used for testing the unmasking bias in the experiments. The [MASK] token was to be replaced by the respective pronouns and OCCUPATION by one of the occupations mentioned in Table \ref{tab:occupations}}
\end{table}

\end{toappendix}
We further constructed 30 template sentences (see Table \ref{tab:templates_train} in the appendix), e.g. '[MASK] is a OCCUPATION' or 'the OCCUPATION enjoyed [MASK] lunch', where [MASK] could either be substituted by he/she or his/her. Given a gender probability for each occupation (e.g. 'nurse' is 60\% male, 40\% female), we inserted the occupation in each of the template sentences and assigned a pronoun, which was randomly selected based on the gender probability, as solution for the masked language task. This results in 7770 sentences for fine-tuning BERT, holding 30\% back as validation set. We used a pre-trained model ('bert-base-uncased') and trained for 5 epochs.\\
To verify that BERT learned the gender associations implied by the training data, we ran an unmasking task on the fine-tuned model using a separate test set constructed with another set of 20 template sentences (see Table \ref{tab:templates_test} in the appendix), where we inserted the occupations. For each sentence, we queried the probability of inserting the male/female pronoun (later called unmasking bias), then took the average over all sentences with the same occupation and measured the $R^2$ correlation of the results with the gender probabilities in the training data (later called training bias). If the correlation was above a threshold of $0.7$, the model was assumed to have learned the bias well enough to be considered for further experiments.\\
We repeated this to generate a variety of biased models. For each model, we defined a normal distribution of gender probabilities (for inserting the male pronoun) by $\mu$ and $\sigma$, then randomly selected the occupation's gender probability (later referred as predefined bias) from this distribution, then constructing the training data as described above. For $\mu$ we used values in $\{0.25, 0.3, 0.35, ..., 0.75\}$ and $\sigma$ in $\{0.1, 0.15, ..., 0.35\}$ to elaborate how well the bias scores react to either shifted distributions (skew) or variance of distributions (stereotype). For each combinations of $\mu$ and $\sigma$, we trained 5 models using the same predefined biases, but generating the sentences and selecting the training set each time to produce randomness. This resulted in 330 models in total.\\
In the following sections, we usually refer to the training bias (the gender probabilities in the actual training data) as opposed to the predefined biases.  we first experiment with word-wise biases. Later, we focus on the bias distribution parameters $\mu$, $\sigma$ and the absolute amount of bias comparing them with the bias scores over all occupation, where each model makes up one data point.

\subsection{Performance of word-wise bias scores}

To evaluate the performance of bias scores on a word-level, we measured the $R^2$ correlations of word-wise biases with the respective training biases as illustrated in Figure \ref{fig:word_corr_ex}. As explained in the last section, we consider the unmasking task as a sanity check and baseline for the best possible outcome. The word-wise bias scores reported here are Eq. \ref{eq:pair_bias} for SAME, Eq. \ref{eq:direct_bias} with $N = \{\bm{w}\}$ and $c = 1$ for the Direct Bias and Eq. \ref{eq:mac} with $T = \{\bm{w}\}$ for MAC. Since Eq. \ref{eq:weat_attr_sim} of WEAT only differs from our pairwise bias (Eq. \ref{eq:pair_bias}) by a constant factor (given fixed attribute sets), it will result in the same correlation. Hence, both bias scores are reported in one graph. As can be seen in the figure, WEAT and SAME outperform MAC and the Direct Bias significantly.

\begin{figure}[tb] 
	\centering
	\includegraphics[scale=0.25]{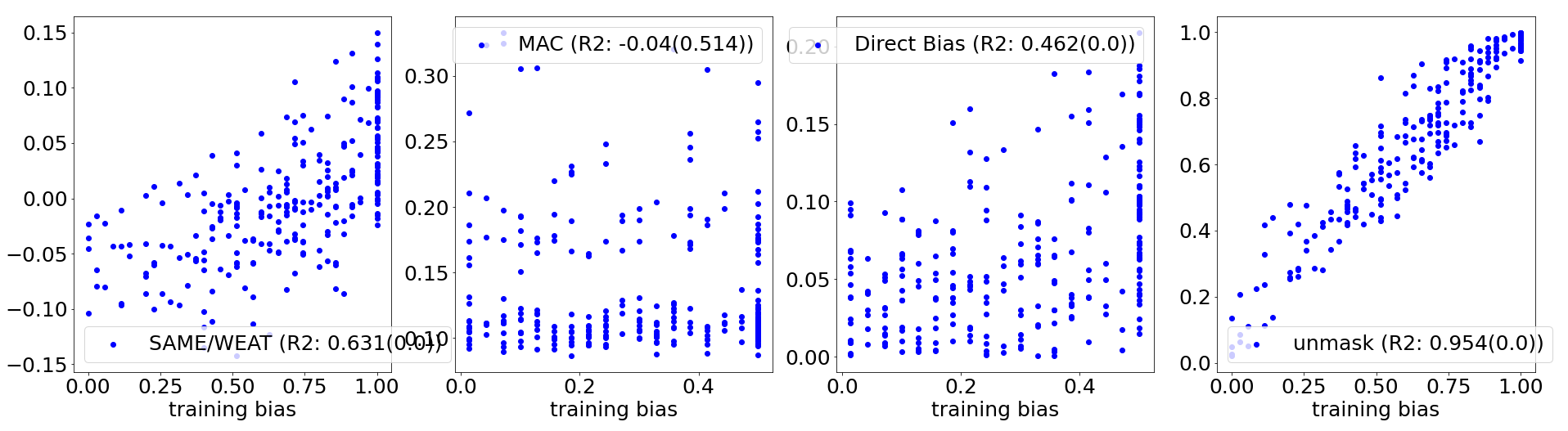}
	\caption{Example for correlation of word-wise biases measured with the different bias scores compared to training biases (probability for male pronoun) $p$. Since MAC and the Direct Bias do not measure the direction of bias, we display $0.5-p$ on the x-axis.}
	\label{fig:word_corr_ex}
\end{figure}

We further calculated these correlations for every fine-tuned model and display the mean and standard deviation of correlations in Figure \ref{fig:word_corr}. The cosine based metrics cannot match the unmasking tasks, but with the mean correlations of around $0.4$ for WEAT/SAME, they still outperform MAC with mean correlation around $0$ and Direct Bias around $0.1$. However, we also see a high standard deviation for WEAT/SAME, which indicates that for some models, the score functions word considerably well (similar to Figure \ref{fig:word_corr_ex}), while for other models, they perform quite bad. This could be an indication that those models represent the biases in a non-linear way that impacts the unmasking task (and possibly other downstream tasks), but does not show with cosine based scores.

\begin{figure}[tb]
	\centering
	\includegraphics[scale=0.25]{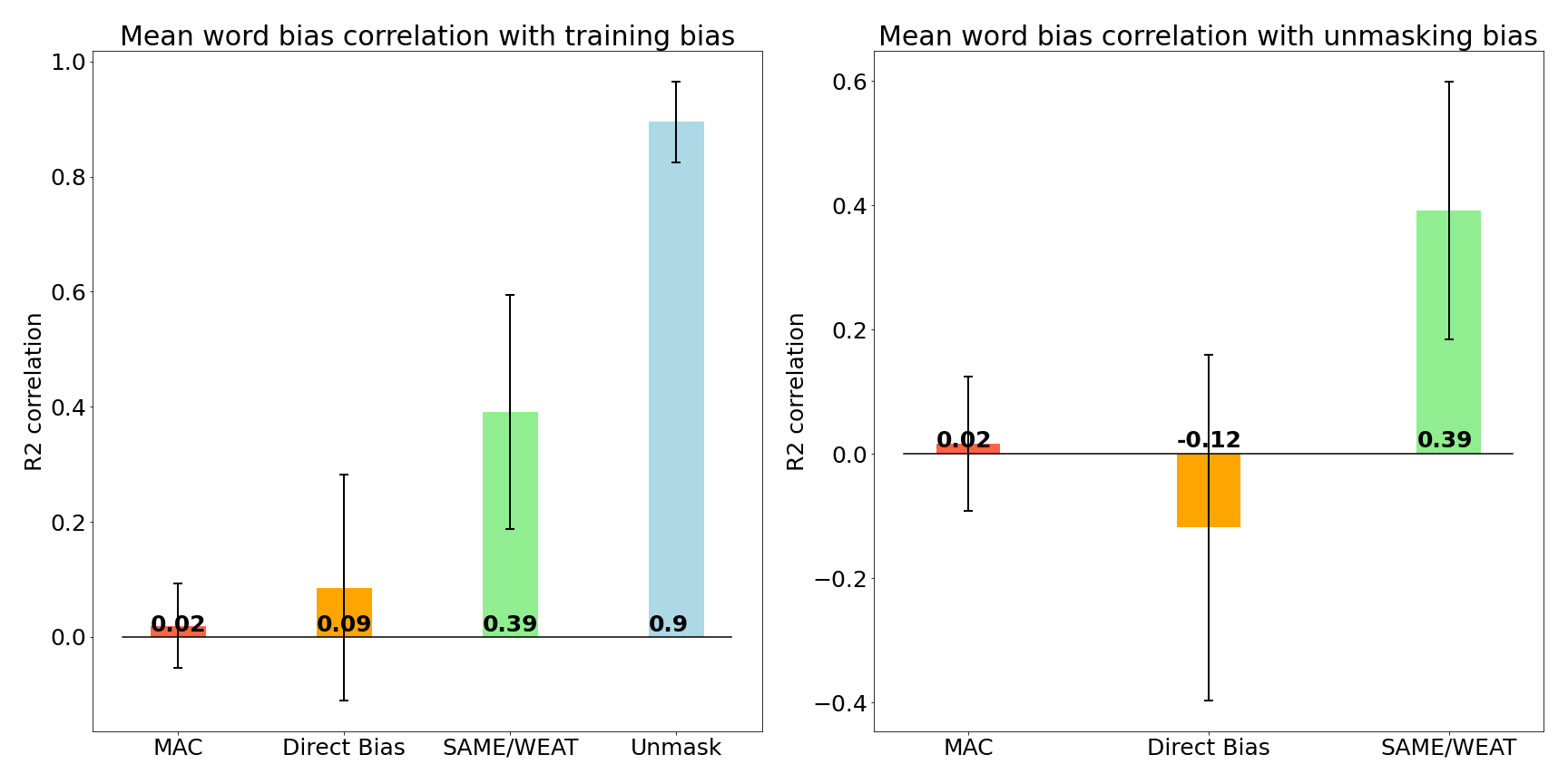}
	\caption{The average correlation of word-wise bias scores with the training and unmasking bias. WEAT and SAME are reported jointly since they only differ by a constant factor and thus share the same correlation.}
	\label{fig:word_corr}
\end{figure}

\subsubsection{Approaches to compute the Bias Direction}
As seen in the previous section, though similar to SAME, the Direct Bias performs significantly worse. Comparing the Equations \ref{eq:direct_bias} and \ref{eq:pair_bias}, we can see that the word-wise biases both measure the cosine between a word $\bm{w}$ and a bias direction. In case of the Direct Bias this direction is obtained by PCA, in case of the proposed metric simply by the mean bias direction. Hence, the difference in performance must be linked to the way the bias direction is determined. To confirm this, we conducted the following experiment:

We selected a number of models where the word-wise bias scores reported by the Direct Bias correlated similar well to those reported by SAME (difference in correlation < $0.25$) and a number of models, where the Direct Bias correlated worse (difference in correlation > $0.4$). In both cases we considered only such models, where SAME correlated with at least $0.6$. For both sets of models, we reported the mean angle between bias directions computed by PCA and by averaging over individual bias directions.

\begin{center}
\begin{tabular}{ cc } 
 \toprule
 similar correlation & 10.859 \\ 
 worse correlation & 17.551 \\ 
 \bottomrule
\end{tabular}
\end{center}

According to these results, the bias direction computed by PCA differs from the average bias direction in an actual use case and this can be linked with lower performance of the Direct Bias. The following Figures \ref{fig:bias_dir1} and \ref{fig:bias_dir2} show exemplary cases with similar bias directions and performance and one with a greater difference between bias directions and performance. 
\begin{figure}[tb]
	\centering
	\includegraphics[scale=0.19]{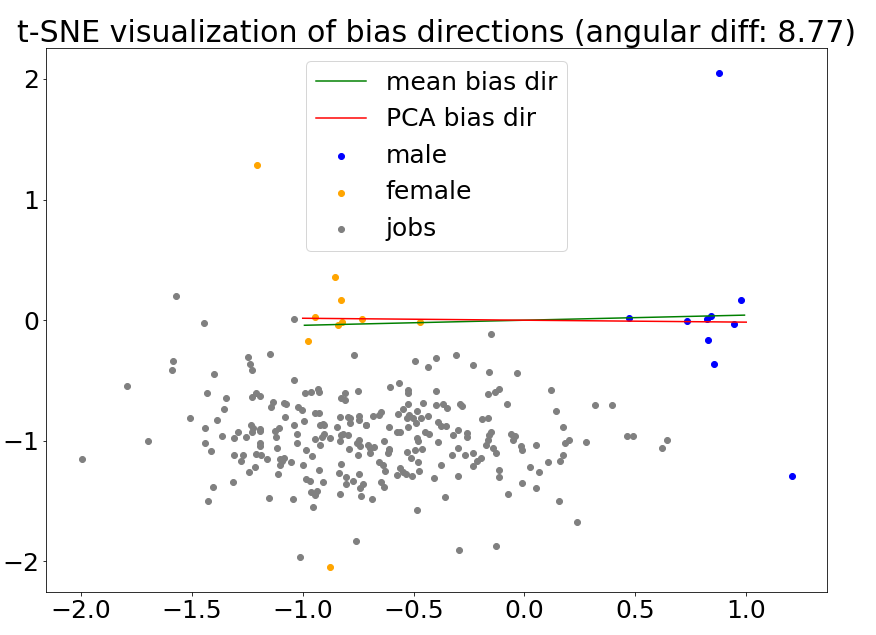}
	\includegraphics[scale=0.19]{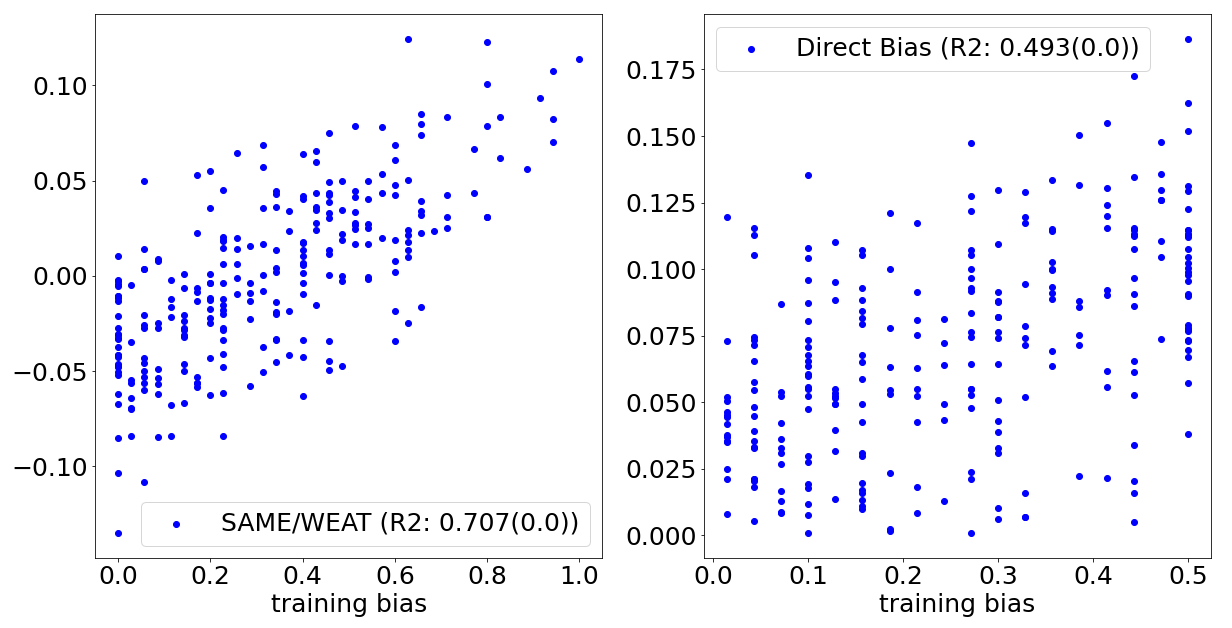}
	\caption{In this example, the Direct Bias shows similar results to WEAT/own metric regarding the word-wise correlation, while both bias directions are very similar.} 
	\label{fig:bias_dir1}
\end{figure}

\begin{figure}[tb]
	\centering
	\includegraphics[scale=0.19]{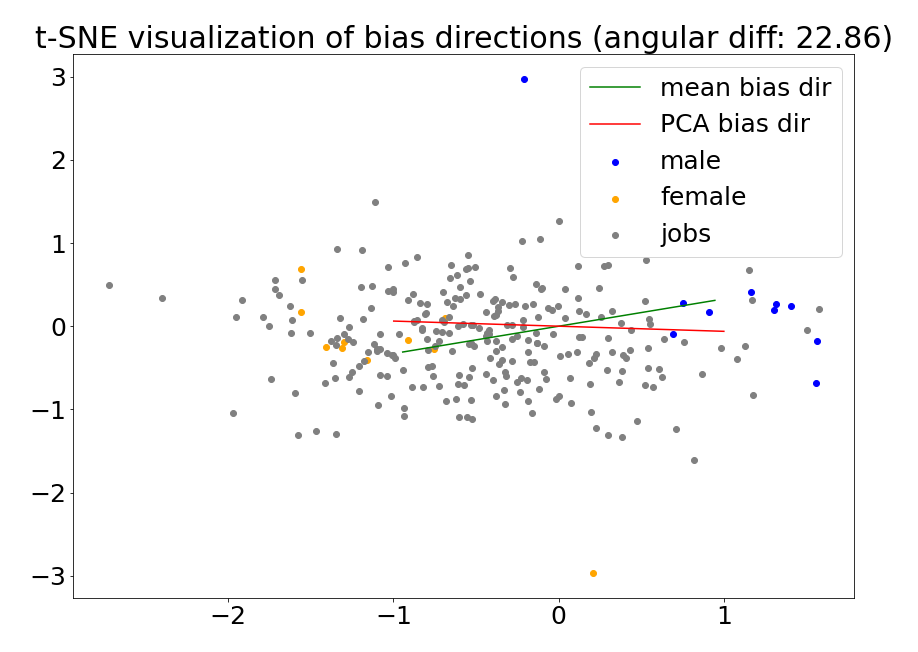}
	\includegraphics[scale=0.19]{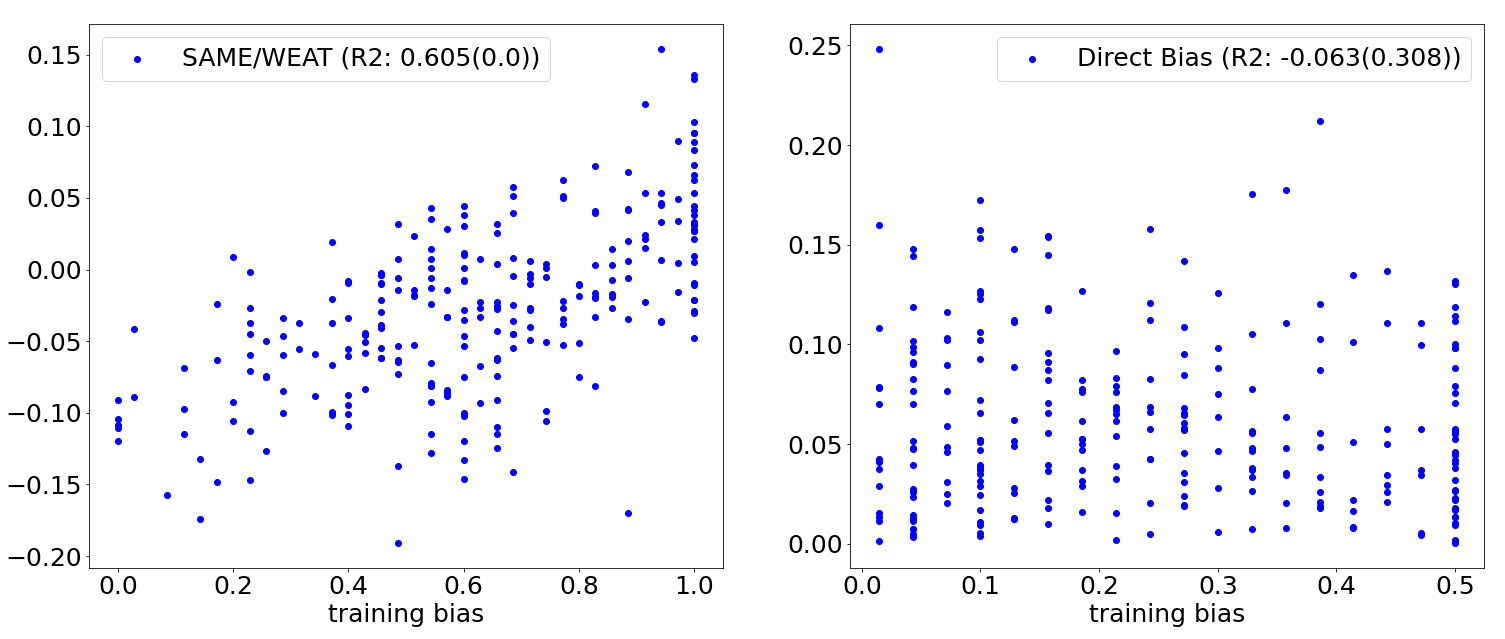}
	\caption{In this example, the Direct Bias performs far worse than WEAT/own metric regarding the word-wise correlation, since the bias direction computed by the PCA differs from the ideal bias direction.} 
	\label{fig:bias_dir2}
\end{figure}

\subsubsection{Comparability of word-wise biases} 
As stated in Definition \ref{def:max_amplitutde}, an important property of a bias score is the comparability of its results. We showed in Theorem \ref{theor:weat_s_extrema} that the extrema of word-wise biases calculated by WEAT depend on the average distance between bias attributes in the embedding space. Hence those biases are not comparable between different embedding models. Opposed to this, our bias score, SAME, mitigates this effect by normalizing over said distance and is thus comparable. To highlight this effect in a practical setting and show that our proposed metric is more robust in this regard, we conducted the following experiment:\\
As mentioned before, we generated $66$ different distributions (combinations of $\mu$ and $\sigma$) of predefined biases, where each occupation was assigned a certain gender probability. Based on each distribution (with fix gender probability per occupation) we generated $5$ sets of training and testing data and fine-tuned BERT on each of these. Since the predefined biases were identical for each of the $5$ models (yet with random noise due to train/test split and sentence generation), we expect a robust bias metric to yield similar results for all of these models.
To probe this, we calculated the standard deviation of word-wise bias scores for each model. Given each set of $5$ comparable models, we computed the percentage difference of these standard deviations and finally took the mean over all $66$ distributions. This resulted in the following percentage differences. Again we take the unmasking task as a baseline/sanity check for the best possible result.

Table \ref{tab:word_bias_comp} shows that SAME is indeed more robust than WEAT and achieves nearly the same robustness as the unmasking bias. However, the Direct Bias performs even worse than WEAT. Considering the similarity of the Direct Bias and SAME, a possible explanation is that the different fine-tuned models also represent the bias attributes in a distinctive way leading to greater variations in the bias direction obtained by PCA compared to our approach. This suggest that using the mean bias direction is preferable to the bias direction by PCA.

\begin{table}[tb]
\centering
\begin{tabular}{cccc}
\toprule
Direct Bias & WEAT & SAME & unmask \\
\midrule
0.306 & 0.150 & 0.121 & 0.096 \\
\bottomrule
\end{tabular}
\caption{\label{tab:word_bias_comp} Robustness of word-wise bias scores. We compute the standard deviations of word-wise biases for each model and then the percentage difference between each set of 5 models trained with the same training bias. Lower values indicate more robust scores.}
\end{table}

\subsection{Performance of bias scores for sets of words}

\begin{figure}[tb]
	\centering
	\includegraphics[scale=0.25]{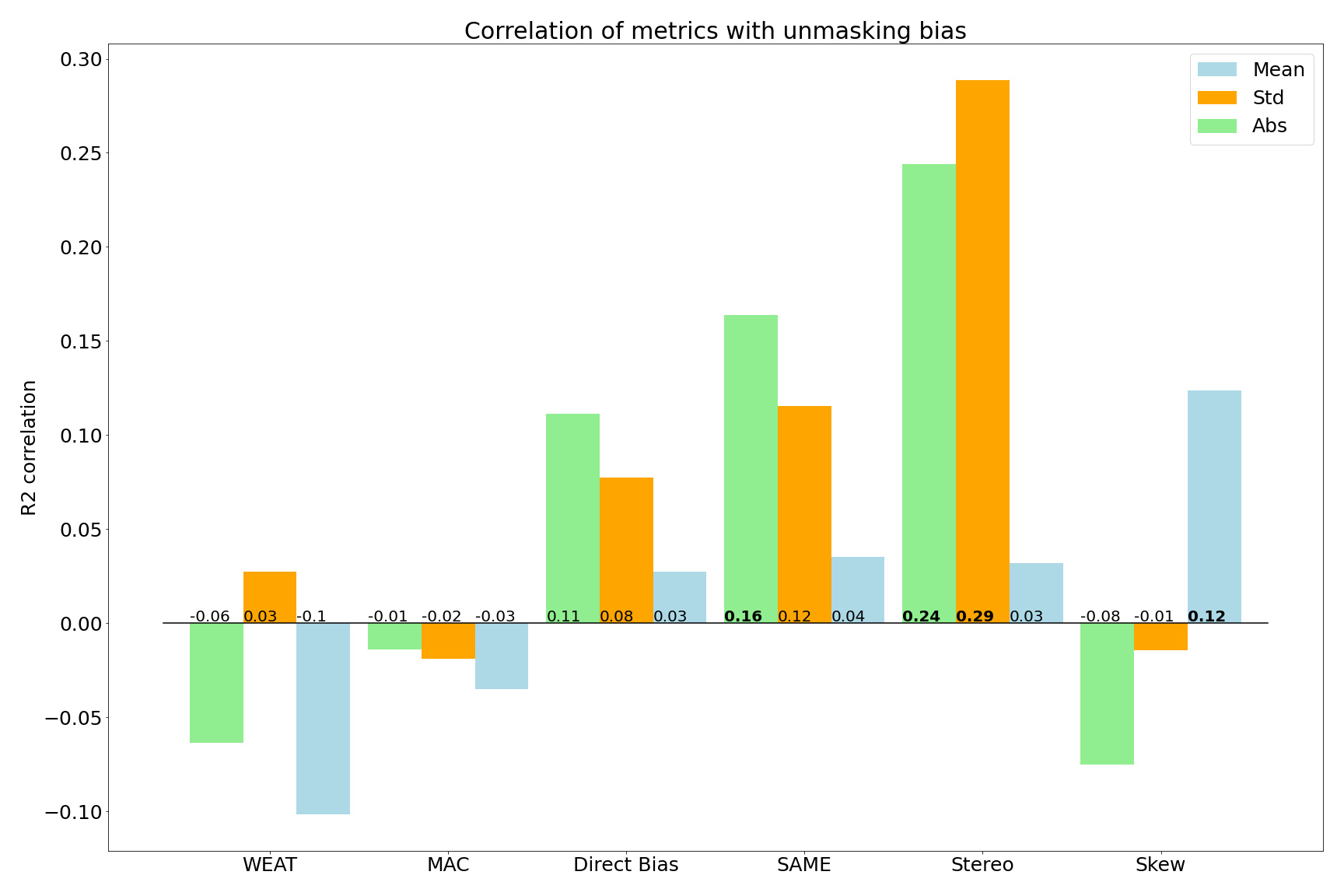}
	\caption{$R^2$ correlation of bias scores (measured over all occupations) with unmasking bias. Bold values indicate a p-value below 0.05. Regarding the mean of biases only our skew score produces a significant correlation. Regarding the standard deviation SAME, our stereotype and WEAT correlate significantly, though the stereotype clearly outperforms the others. Regarding the absolute bias, SAME also surpasses WEAT, though the stereotype version works even better.}
	\label{fig:set_corr_unmask}
\end{figure}

In the next step, we evaluated the bias scores applicable to a set of words $W$ or sets of words (e.g. $X$, $Y$ for WEAT). This is critical to distinguish SAME from WEAT and evaluate the skew and stereotype adaptions. In our case $W$ is the list of occupations and the groups $X$, $Y$ for WEAT are selected using the stereotype ratings provided with the occupations from \cite{bolukbasi}, reflecting actual stereotypes in society.\\
In Figure \ref{fig:set_corr_unmask} we illustrate the correlations of all bias scores discussed in the paper. Here we use the unmasking task as reference to measure correlation. We expect this to be more accurate than comparison to the training bias, since the unmasking bias undoubtedly reflects biases learned by the model. We consider the mean and standard deviation of word-wise unmasking biases, as well as the absolute bias ($\sum |0.5-p|$ with $p$ the probability of male pronouns as predicted by the unmasking task).

As expected after investigating the word-wise bias correlations, MAC and the Direct Bias perform rather bad. WEAT produces better results compared to the unmasking standard deviation, but SAME (standard version) also outperforms WEAT with regard to the unmasking standard deviation and absolute biases. Other than that, our proposed stereotype metric outperforms all metrics from the literature in terms of standard deviation and absolute bias by a large margin. Regarding the mean of word biases, our skew implementation is the only score the achieves a significant correlation with $0.21$.
Hence, our proposed score function and its versions are clearly preferable over the metrics from the literature and this experiment further emphasizes the benefits of using the skew and stereotype scores. Yet, keep in mind that these two should be used jointly to prevent overlooking the other kind of bias. Nevertheless,  we conduct another experiment to highlight the robustness of our score function.

\subsubsection{Robustness to Permutation and Subset Selection}
\label{sec:perm_test}
When observing bias in certain domains or groups of target words, we might not always have the resources or knowledge to measure bias over all relevant words (or sentences). An example are the different WEATs (and their versions/ implementations in the literature), where only a limited number of words is used to measure bias in terms of a concept (e.g. the career/family test). Using only a small subset poses the risk to misjudge bias due to noise or individual words' associations rather than associations of the whole concept. In this experiment,  we investigate how robust the different bias scores are to this problem. For each model trained,  we determine the bias over all occupations in $W$, then run a permutation test ($n = 100$ iterations) and determine the bias over subsets $W_i$ with half the number of occupations, randomly selected from $W$.
We report $\frac{1}{n} \sum_{i = 1}^{n} |b(W_i) - b(W)|$ for each model, i.e. the mean differences in biases over $W_i$ compared to $W$ with $b(W)$ to be replaced by concrete bias scores. We further take the mean over all models and normalize by the size of bias score intervals. Thereby, we obtain the results shown in Table \ref{tab:perm_test}.
We observe similar results for all score functions beside WEAT, for which we report differences larger by one order of magnitude. This is most likely due to the splitting of target words into groups $X$, $Y$, which is unique for WEAT. Hence, WEAT is far less robust to subset selection than the other metrics.
\begin{table}[tb]
\centering
\begin{tabular}{cccccc}
\toprule
MAC & WEAT & Direct Bias & SAME & $\same_{stereo}$ & $\same_{skew}$ \\
\midrule
0.0010 & 0.0397 & 0.0018 & 0.0016 & 0.0016 & 0.0022 \\
\bottomrule
\end{tabular}
\caption{\label{tab:perm_test} Robustness to permutation and subset selection of neutral words. We report the mean difference in biases reported over subsets $W_i$ compared to biases reported over all neutral words in $W$. Lower values indicate more robust models.}
\end{table} 

\section{Conclusions}
\label{sec:conclusion}
In conclusion, we proved in Chapter \ref{sec:requirements} that all existing cosine based bias scores have one or several drawbacks that make them unreliable to quantify bias. The baseline definition for this claim is linked in the literature, e.g. \cite{bolukbasi, weat}. Nevertheless, we also showed in our experiments that the weaknesses of the bias scores manifest in practice, which further confirms our statement. Furthermore, we proposed three new bias scores: SAME for bias quantification that suffices all requirements stated in Chapter \ref{sec:requirements}, and two versions to distinguish between skew and stereotype.\\
An overview over the capabilities and drawbacks of the bias scores can be found in Table \ref{tab:overview_complete}.

\begin{table}[t] 
\centering
\caption{Overview over the properties of bias scores.}
\label{tab:overview_complete}
\begin{tabular}{ ccc ccc cc}
\toprule
bias score & comparable & trustworthy & bias score & comparable & trustworthy & skew & stereotype \\
\cmidrule(r){1-3} \cmidrule(l){4-8}
\weatw &  x & \checkmark & \weat &  \checkmark & x & x & (\checkmark)  \\
\macw & x & x & \mac &  x & x & x & x\\
\dbw & \checkmark & x & \db &  \checkmark & x & x & x \\
\samew & \checkmark & \checkmark & \same & \checkmark & \checkmark & \checkmark & \checkmark \\
&  &  & $\same_{stereo}$ & \checkmark & \checkmark & x & \checkmark \\
&  &  & $\same_{skew}$ & \checkmark & \checkmark & \checkmark & x \\
\bottomrule
\end{tabular}
\end{table}

Based on both the theoretical evaluation and our experiments we argue against using the Direct Bias and MAC, as we don't see advantages in them. We acknowledge that WEAT can be useful when people are interest in a specific kind of stereotype (aligning with groups $X$ and $Y$), but caution to treat the results carefully in light of our findings. While high effect sizes with a low p-value prove the presence of stereotypes, low effect sizes (and high p-values) are not particularly meaningful. Hence WEAT cannot be used for quantitative bias measurements, but only as an indicator.\\
SAME on the other hand, can be used for quantification as it will report any bias in terms of the geometrical definition and provide comparable results. The skew and stereotype complement this to allow better insights and proved particularly useful in the experiments regarding both the correlation with our baseline and robustness.\\
Yet, our experiments also show that none of the cosine based scores can entirely grasp the biases manifesting in the models (both in relation to the training data as well as the unmasking task). This shows the need to evaluate biases also in terms of a downstream task if deploying the model in that context or possibly considering other tests in addition. In that context, we want to point out two works that used a multitude of tests to enrich their bias evaluation: \cite{nullitout} and \cite{costa2019evaluating}. While we cannot advocate their choice of cosine based metric, having multiple tests to allow a more thorough evaluation of biases seems like a wise choice as long as there is no consensus about the ideal bias metrics for word embeddings.\\
Of course, this again shows the importance of research in the field of bias evaluation methods. We would highly recommend looking further into when cosine based metrics fall short or showing whether other tests from the literature (e.g. clustering test) can fill this gap appropriately.



\section*{Acknowledgements}
We gratefully acknowledge the funding by the German Federal Ministry of
Economic Affairs and Energy (BMWi) within the “Innovationswettbewerb Künstliche Intelligenz" (01MK20007E), by the Ministry of Culture and Science of the state of North Rhine-Westphalia in the project "Bias aus KI-Modellen" and the German Federal Ministry of Education and Research (BMBF) through the project \textit{EML4U} (01IS19080 A).

\bibliographystyle{unsrtnat}
\bibliography{references}  

\end{document}